\newtheorem{theorem}{Theorem}
\newtheorem{lemma}{Lemma}
\newtheorem{corollary}{Corollary}
\newtheorem{definition}{Definition}
\newenvironment{theoremFixed}[1]
  {\innercustomthm}
  {\endinnercustomthm}
\newenvironment{lemmaFixed}[1]
  {\innercustomlem}
  {\endinnercustomlem}
\definecolor{darkblue}{rgb}{0,0.1,0.5}
\definecolor{black}{rgb}{0,0,0}
\renewcommand{\vec}[1]{\mathbold{#1}}
\newcommand{\argmin}{\operatornamewithlimits{argmin}}
\newcommand{\argmax}{\operatornamewithlimits{argmax}}
\newcommand{\Argmin}{\operatornamewithlimits{Argmin}}
\renewcommand{\paragraph}[1]{\noindent\textbf{#1}}
\newcommand{\st}{\mathop{\mathrm{s.t.}}}
\begin{document}
%
\title{Submodular relaxation for inference in Markov random fields}
%
%
%
%


\author{Anton~Osokin \qquad\qquad 
        Dmitry~Vetrov
\IEEEcompsocitemizethanks{\IEEEcompsocthanksitem
A. Osokin is with SIERRA team, INRIA and \'{E}cole Normale Sup\'{e}rieure, Paris, France.
E-mail: anton.osokin@inria.fr
\IEEEcompsocthanksitem D. Vetrov is with Faculty of computer science, Higher School of Economics, Moscow, Russia and Lomonosov Moscow State University, Moscow, Russia.
E-mail: vetrovd@yandex.ru}
\thanks{}}

\IEEEcompsoctitleabstractindextext{%
\begin{abstract}
In this paper we address the problem of finding the most
probable state of a discrete Markov random field (MRF), also known as the MRF energy minimization problem.
The task is known to be NP-hard in general and its practical importance motivates numerous approximate algorithms.
We propose a submodular relaxation approach (SMR) based on a Lagrangian relaxation of the initial problem. Unlike the dual decomposition approach of~\citet{Komodakis10ddtrw} SMR does not decompose the graph structure of the initial problem but constructs a submodular energy that is minimized within the Lagrangian relaxation. Our approach is applicable to both pairwise and high-order MRFs and allows to take into account global potentials of certain types.
We study theoretical properties of the proposed approach and evaluate it experimentally.
\end{abstract}

\begin{keywords}
Markov random fields, energy minimization, combinatorial algorithms, relaxation, graph cuts
\end{keywords}}

\maketitle

\IEEEdisplaynotcompsoctitleabstractindextext

%
\IEEEpeerreviewmaketitle

\section{Introduction \label{sec::intro}}
The problem of inference in a Markov random field (MRF) arises in many applied domains, e.g. in machine learning, computer vision, natural language processing, etc.
In this paper we focus on one important type of inference: maximum a posteriori (MAP) inference, often referred to as MRF energy minimization.
Inference of this type is a combinatorial optimization problem, i.e. an optimization problem with the finite domain.

The most studied case of MRF energy minimization is the situation when the energy can be represented as a sum of terms (potentials) that depend on only one or two variables each (unary and pairwise potentials).
In this setting the energy is said to be defined by a graph where the nodes correspond to the variables and the edges to the pairwise potentials.
Minimization of energies defined on graphs in known to be NP-hard in general~\cite{Boykov01} but can be done exactly in polynomial time in a number of special cases, e.g. if the graph defining the energy is acyclic~\cite{Pearl88} or if the energy is submodular in standard~\cite{Kolmogorov04} or multi-label sense~\cite{Darbon09}.


One way to go beyond pairwise potentials is to add higher-order summands to the energy.
For example, \citet{kohli2009robust} and \citet{Ladicky09ale} use  high-order potentials based on superpixels (image regions) for semantic image segmentation; \citet{Delong12ijcv} use label cost potentials for geometric model fitting tasks.

To be tractable, high-order potentials need to have a compact representation.
The recently proposed sparse high-order potentials define specific values for a short list of preferred configurations and the default value for all the others~\cite{Komodakis09highorder}, \cite{Rother09highorder}.

In this paper we develop the submodular relaxation framework (SMR) for approximate minimization of energies with pairwise and sparse high-order potentials.
SMR is based on a Lagrangian relaxation of consistency constraints on labelings, i.e. constraints that make each node to have exactly one label assigned.
The SMR Lagrangian can be viewed at as a pseudo-Boolean function of binary indicator variables and is often submodular, thus can be efficiently minimized with a max-flow/min-cut algorithm (graph cut) at each step of the Lagrangian relaxation process.

In this paper we explore the SMR framework theoretically and provide explicit expressions for the obtained lower bounds.
We experimentally compare our approach against several baselines (decomposition-based approaches~\cite{Komodakis10ddtrw}, \cite{Kolmogorov06trws}, \cite{Komodakis09highorder}) and show its applicability on a number of real-world tasks.

The rest of the paper is organized as follows. In sec.~\ref{sec::relatedWorks} we review the related works. In sec.~\ref{sec::notation} we introduce our notation and discuss some well-known results. In sec.~\ref{sec::smr} we present the SMR framework. In sec.~\ref{sec::theorOptimization} we construct several algorithms within the SMR framework. In sec.~\ref{sec::theory} we present the theoretical analysis of our approach and in sec.~\ref{sec::experiments} its experimental evaluation. We conclude in sec.~\ref{sec::conclusion}.

\section{Related works \label{sec::relatedWorks}}
We mention works related to ours in two ways. First, we discuss methods that analogously to SMR rely on multiple calls of graph cuts.
Second, we mention some decomposition methods based on Lagrangian relaxation.

\paragraph{Multiple graph cuts.} The SMR framework uses the one-versus-all encoding for the variables, i.e. the indicators that particular labels are assigned to particular variables, and runs the graph cut at the inner step. In this sense our method is similar to the $\alpha$-expansion algorithm~\cite{Boykov01} (and its generalizations \cite{Kohli08potts}, \cite{kohli2009robust}) and the methods for minimizing multi-label submodular functions~\cite{Ishikawa03}, \cite{Darbon09}. Nevertheless SMR is quite different from these approaches.  The $\alpha$-expansion algorithm is a move-making method where each move decreases the energy. Methods of~\cite{Ishikawa03} and~\cite{Darbon09} are exact methods applicable only when a total order on the set of labels is defined and the pairwise potentials are convex w.r.t. it (or more generally submodular in multi-label sense). Note that the frequently used Potts model is not submodular when the number of possible labels is larger than~2, so the methods of~\cite{Ishikawa03} and \cite{Darbon09} are not applicable to it.

\citet{Jegelka11,Kohli13} studied the high-order potentials that group lower-order potentials together (cooperative cuts).
Similarly to SMR cooperative cut methods rely on calling graph cuts multiple times.
As opposed to SMR these methods construct and minimize upper (not lower) bounds on the initial energy.

The QPBO method~\cite{Boros02} is a popular way of constructing the lower bound on the global minimum using graph cuts.
When the energy depends on binary variables the SMR approach obtains exactly the same lower bound as QPBO.


\paragraph{Decomposition methods}
split the original hard problem into easier solvable problems and try to make them reach an agreement on their solutions. One advantage of these methods is the ability not only to estimate the solution but also to provide a lower bound on the global minimum.
Obtaining a lower bound is important because it gives an idea on how much better the solution could possibly be.
If the lower bound is tight (the obtained energy equals the obtained lower bound) the method provides a \emph{certificate of optimality}, i.e. ensures that the obtained solution is optimal.
The typical drawback of such methods is a higher computational cost in comparison with e.g. the $\alpha$-expansion algorithm~\cite{Kappes13comparison}.

Usually a decomposition method can be described  by the two features: a way to enforce agreement of the subproblems and type of subproblems.
We are aware of two ways to enforce agreement: message passing and Lagrangian relaxation. Message passing was used in this context by \citet{Wainwright05trw}, \citet{Kolmogorov06trws} in the tree-reweighted message passing (TRW) algorithms.
The Lagrangian relaxation was applied to energy minimization by \citet{Komodakis10ddtrw} and became very popular because of its flexibility.
The SMR approach uses the Lagrangian relaxation path.

The majority of methods based on the Lagrangian relaxation decompose the graph defining the problem into subgraphs of simple structure.
The usual choice is to use acyclic graphs (trees) or, more generally, graphs of low treewidth \cite{Komodakis10ddtrw}, \cite{Batra10outer}.
\citet{Komodakis10ddtrw} pointed out the alternative option of using submodular subproblems but only in case of problems with binary variables and together with the decomposition of the graph.
\citet{Strandmark10} used decomposition into submodular subproblems to construct a distributed max-flow/min-cut algorithm for large-scale binary submodular problems.
\citet{Komodakis09highorder} generalized the method of~\cite{Komodakis10ddtrw} to work with the energies with sparse high-order potentials. In this method groups of high-order potentials were put into separate subproblems.

Several times the Lagrangian relaxation approach was applied to the task of energy minimization without decomposing the graph of the problem.
\citet{Yarkony11planar} created several copies of each node and enforced copies to take equal values.
\citet{Yarkony11tightening} introduced planar binary subproblems in one-versus-all way (similarly to SMR) and used them in the scheme of tightening the relaxation.

The SMR approach is similar to the methods of \cite{Batra10outer}, \cite{Komodakis09highorder}, \cite{Komodakis10ddtrw}, \cite{Strandmark10}, \cite{Yarkony11planar}, \cite{Yarkony11tightening} in a sense that all of them use the Lagrangian relaxation to reach an agreement.
The main advantage of the SMR is that the number of dual variables does not depend on the number of labels and the number of high-order potentials but is fixed to the number of nodes. This structure results in the speed improvements over the competitors.

\section{Notation and preliminaries \label{sec::notation}}
\paragraph{Notation.}
Let~$\mathcal{V}$ be a finite set of variables. Each variable $i \in \mathcal{V}$ takes values $x_i \in \mathcal{P}$ where $\mathcal{P}$ is a finite set of labels. For any subset of variables $C \subseteq \mathcal{V}$ the Cartesian product $\mathcal{P}^{|C|}$ is the \emph{joint domain} of variables~$C$. A \emph{joint state} (or a \emph{labeling}) of variables $C$ is a mapping $\vec{x}_C: C \to \mathcal{P}$ and $\vec{x}_C(i)$, $i \in C$  is the image of $i$ under  $\vec{x}_C$. We use $\mathcal{P}^C$ to denote a set of all possible joint states of variables in~$C$.\!\footnote{
Note that $\mathcal{P}^C$ is highly related to $\mathcal{P}^{|C|}$. The ``mapping terminology'' is introduced to allow original variables $i \in C \subseteq \mathcal{V}$  to be used as arguments of labelings: $\vec{x}_{C}(i)$.
}
A joint state of all variables in~$\mathcal{V}$ is denoted by~$\vec{x}$ (instead of $\vec{x}_{\mathcal{V}}$). If an expression contains both $\vec{x}_C \in \mathcal{P}^C $ and $\vec{x}_A \in \mathcal{P}^A$, $A, C \subseteq \mathcal{V}$ we assume that the mappings are consistent, i.e. $\forall i \in A \cap C : \vec{x}_{C}(i) = \vec{x}_{A}(i)$.
%

Consider hypergraph $(\mathcal{V}, \mathcal{C})$ where $\mathcal{C} \subseteq 2^\mathcal{V}$ is a set of hyperedges.
A \emph{potential} of hyperedge $C$ is a mapping
$
\theta_C: \mathcal{P}^C \to \mathbb{R}
$ and a number $\theta_C( \vec{x}_C ) \in \mathbb{R}$ is the value of the potential on labeling $\vec{x}_C$.
The \emph{order} of potential $\theta_C$ refers to the cardinality of set~$C$ and is denoted by $|C|$.
Potentials of order 1 and 2  are called \emph{unary} and \emph{pairwise} correspondingly. Potentials of order higher than 2 are called \emph{high-order}.

An \emph{energy} defined on hypergraph $(\mathcal{V}, \mathcal{C})$ (\emph{factorizable} according to hypergraph $(\mathcal{V}, \mathcal{C})$) is a mapping $E: \mathcal{P}^\mathcal{V} \to \mathbb{R}$ that can be represented as a sum of potentials of hyperedges~$C \in \mathcal{C}$:
\begin{equation}
\label{eq::notation::energyMultilabel}
E(\vec{x}) = \sum_{C \in \mathcal{C}} \theta_C( \vec{x}_C).
\end{equation}
It is convenient to use notational shortcuts related to potentials and variables: $\theta_{i_1\dots i_k} (x_{i_1},\dots, x_{i_k}) = \theta_{i_1\dots i_k, x_{i_1},\dots, x_{i_k}} = \theta_{C, \vec{x}_C} =\theta_C( \vec{x}_C)$, $\vec{x}_C(i) = x_{C, i}$  where $C = \{ i_1,\dots, i_k \}$ and $i\in C$.

When an energy is factorizable to unary and pairwise potentials only we call it \emph{pairwise}. We write pairwise energies as follows:
\begin{equation}
\label{eq::notation::energyMultilabelPairwise}
E(\vec{x}) = \sum_{i \in \mathcal{V}} \theta_i( x_i) + \sum_{\{i, j\} \in \mathcal{E}} \theta_{ij}( x_i, x_j).
\end{equation}
Here $\mathcal{E}$ is a set of \emph{edges}, i.e. hyperedges of order 2. Under this notation set $\mathcal{C}$ contains hyperedges of order 1 and 2:
$
\mathcal{C} = \{ \{i\} \mid i \in \mathcal{V} \} \cup \{ \{i, j\} \mid \{i, j\} \in \mathcal{E} \}
$.

Sometimes it will be convenient to use indicators of each variable taking each value: $y_{ip} = [x_i\!\!=\!p]$, $i \in \mathcal{V}$, $p \in \mathcal{P}$. Indicators allow us to rewrite the energy~\eqref{eq::notation::energyMultilabel} as
\begin{equation}
\label{eq::notation::energyIndicators}
E_I( \vec{y} ) = \sum_{C \in \mathcal{C}} \sum_{\vec{d} \in \mathcal{P}^C} \theta_{C, \vec{d}} \prod_{i \in C} y_{i d_{i}}.
\end{equation}

Minimizing \emph{multilabel} energy~\eqref{eq::notation::energyMultilabel} over $\mathcal{P}^{\cal V}$ is equivalent to minimizing \emph{binary} energy~\eqref{eq::notation::energyIndicators} over the Boolean cube $\{ 0, 1 \}^{\mathcal{V} \times \mathcal{P}}$ under so-called \emph{consistency constraints}:
\begin{equation}
\label{eq::notation::consistencyConstraints}
\sum_{p \in \mathcal{P}} y_{ip} = 1, \quad \forall i \in \mathcal{V}.
\end{equation}
Throughout this paper, we use equation numbers to indicate a constraint given by the corresponding equation, for example $\vec{y} \in \eqref{eq::notation::consistencyConstraints}$.

\paragraph{The Lagrangian relaxation approach.}
A popular approximate approach to minimize energy~\eqref{eq::notation::energyMultilabel} is to enlarge the variables' domain, i.e. perform the \emph{relaxation}, and to construct the dual problem to the relaxed one as it might be easier to solve. Formally, we get
\begin{equation}
\label{eq::gaps}
\max_{\vec{\lambda}} D(\vec{\lambda})  \leq \min_{\vec{y} \in \mathcal{Y}} E_{lb}(\vec{y}) \leq \min_{\vec{x} \in\mathcal{X}} E(\vec{x})
\end{equation}
where $\mathcal{X}$ and $\mathcal{Y}$ are discrete and continuous sets, respectively, where $\mathcal{X} \subset \mathcal{Y}$. $E_{lb}(\vec{y})$ is a lower bound function defined on set~$\mathcal{Y}$ and $D(\vec{\lambda})$ a dual function.
Both inequalities in~\eqref{eq::gaps} can be strict.
If the left one is strict we say that there is a non-zero \emph{duality gap}, if the right one is strict~-- there is a non-zero \emph{integrality gap}.
Note that if the middle problem in~\eqref{eq::gaps} is convex and one of the regularity conditions is satisfied (e.g. it is a linear program) there is no duality gap.
The goal of all relaxation methods is to make the joint gap in~\eqref{eq::gaps} as small as possible.
In this paper we will construct the dual directly from the discrete primal and will need the continuous primal only for the theoretical analysis.


\paragraph{The linear relaxation of a pairwise energy}
consists in converting the energy minimization problem to an equivalent \emph{integer linear program (ILP)} and in relaxing it to a \emph{linear program (LP)}~\cite{Schlesinger76}, \cite{Wainwright05trw}.

\citet{Wainwright05trw} have shown that minimizing energy~\eqref{eq::notation::energyMultilabelPairwise} over the discrete domain is equivalent to minimizing the linear function
\begin{equation}
\label{eq::notation::energyLinear}
E_{L}(\vec{y}_L) = \sum_{i \in \mathcal{V}} \sum_{p \in \mathcal{P}} \theta_{ip} y_{ip} + \sum_{(i, j) \in \mathcal{E}} \sum_{p,q \in \mathcal{P}} \theta_{ij,pq} y_{ij,pq}
\end{equation}
w.r.t. unary~$y_{ip} \geq 0$ and pairwise~$y_{ij,pq} \geq 0$ variables over the so-called \emph{marginal polytope}: a linear polytope with facets defined by all marginalization constraints. The marginal polytope has an exponential number of facets, thus minimizing~\eqref{eq::notation::energyLinear} over it is intractable.
A polytope defined using only the first-order marginalization constraints is called a \emph{local polytope}:
\begin{align}
\label{eq::notation::localGConsistensy}
&\sum_{p \in \mathcal{P}} y_{ip} = 1, \quad \forall i\!\in\!\mathcal{V}, \\
\label{eq::notation::localGMarginalization}
&\sum_{p \in \mathcal{P}} y_{ij,pq}\!=\!y_{jq},\:\sum_{q \in \mathcal{P}} y_{ij,pq}\!=\!y_{ip}, \: \forall \{i,j\}\!\in\!\mathcal{E},\:\forall p,q\!\in\!\mathcal{P}.
\end{align}
We refer to constraints~\eqref{eq::notation::localGConsistensy} as \emph{consistency constraints} and to constraints~\eqref{eq::notation::localGMarginalization} as \emph{marginalization constraints}.

Minimizing~\eqref{eq::notation::energyLinear} over local polytope~\eqref{eq::notation::localGConsistensy}-\eqref{eq::notation::localGMarginalization} is often called a Schlesinger's LP relaxation~\cite{Schlesinger76} (see e.g.~\cite{Werner07} for a recent review). In this paper we refer to this relaxation as the \emph{standard LP relaxation}.

\section{Submodular Relaxation \label{sec::smr}}
In this section we present our approach. We start with pairwise MRFs (sec.~\ref{sec::pairwiseMRFs})~\cite{Osokin11cvpr}, generalize to high-order potentials (sec.~\ref{sec::highOrderMRFs}) and to linear global constraints (sec.~\ref{sec::globalTerms}).
A version of SMR for nonsubmodular Lagrangians is discussed in the supplementary material.
\subsection{Pairwise MRFs\label{sec::pairwiseMRFs}}
The indicator notation allows us to rewrite the minimization of pairwise energy~\eqref{eq::notation::energyMultilabelPairwise} as a constrained optimization problem:
\begin{align}
\label{eq::smd::problemPairwiseIndicator}
\min_{\vec{y}}&\quad \sum_{i \in \mathcal{V}} \sum_{p \in \mathcal{P}} \theta_{ip} y_{ip} + \sum_{ \{i, j\} \in \mathcal{E}} \sum_{p,q \in \mathcal{P}} \theta_{ij,pq} y_{ip} y_{jq},\\
\label{eq::smd::discretizationContraints}
\st
&\quad y_{ip} \in \{0, 1\}, \quad \forall i \in \mathcal{V}, p \in \mathcal{P},\\
\label{eq::smd::consistencyConstraints}
 &\quad \sum_{p \in \mathcal{P}} y_{ip} = 1, \quad \forall i \in \mathcal{V}.
\end{align}
Constraints~\eqref{eq::smd::consistencyConstraints} (i.e.~\eqref{eq::notation::consistencyConstraints})  guarantee each node~$i$ to have exactly one label assigned,  constraints~\eqref{eq::smd::discretizationContraints} fix variables~$y_{ip}$ to be binary.

The target function~\eqref{eq::smd::problemPairwiseIndicator} (denoted by $E_{I}(\vec{y})$) of binary variables~$\vec{y}$ (i.e.~\eqref{eq::smd::discretizationContraints} holds) is a pseudo-Boolean polynomial of degree at most~$2$. This function is easy to minimize (ignoring the consistency constraints~\eqref{eq::smd::consistencyConstraints}) if it is submodular, which is equivalent to having $\theta_{ij,pq} \leq 0$ for all edges~$\{i,j\} \in \mathcal{E}$ and all labels $p,q \in \mathcal{P}$. 
The so-called associative potentials are examples of potentials that satisfy these constraints.
A pairwise potential~$\theta_{ij}$ is~\emph{associative} if it rewards connected nodes for taking the same label, i.e.
\begin{equation}
\label{eq::smd::Potts}
\theta_{ij}(p, q) = -C_{ij,p} [p = q] =
\begin{cases}
-C_{ij,p}, & p = q, \\
0, & p \neq q.
\end{cases}
\end{equation}
where~$C_{ij,p} \geq 0$. Note that if constants~$C_{ij,p}$ are independent of label index~$p$ (i.e. $C_{ij,p}=C_{ij}$) then potentials~\eqref{eq::smd::Potts} are equivalent to the Potts potentials.

\citet{Osokin11cvpr} observe that when~$E_{I}(\vec{y})$ is submodular consistency constraints~\eqref{eq::smd::consistencyConstraints} only prevent problem~\eqref{eq::smd::problemPairwiseIndicator}-\eqref{eq::smd::consistencyConstraints} from being solvable, i.e. if we drop or perform the Lagrangian relaxation of them we obtain an easily computable lower bound. In other words we can look at the following Lagrangian:
\begin{equation}
\label{eq::smd::lagr}
L( \vec{y},\vec{\lambda})
=
E_I(\vec{y})+\sum_{i\in\mathcal{V}}\lambda_i\biggl(\sum_{p \in \mathcal{P}} y_{ip}-1\biggr).
\end{equation}
As a function of binary variables~$\vec{y}$ Lagrangian~$L( \vec{y},\vec{\lambda})$ differs from $E_I(\vec{y})$ only by the presence of unary potentials $\lambda_i y_{ip}$ and a constant, so Lagrangian~$L( \vec{y},\vec{\lambda})$ is submodular whenever $E_I(\vec{y})$ is submodular.

Applying the Lagrangian relaxation to the problem~\eqref{eq::smd::problemPairwiseIndicator}-\eqref{eq::smd::consistencyConstraints} we bound the solution from below
\begin{multline}
\label{eq::smd}
\min_{\vec{y} \in \eqref{eq::smd::discretizationContraints}, \eqref{eq::smd::consistencyConstraints}} E_I(\vec{y})
=
\min_{\vec{y} \in \eqref{eq::smd::discretizationContraints}} \max_{\vec{\lambda}} L(\vec{y}, \vec{\lambda})
\geq
\\
\max_{\vec{\lambda}}\min_{ \vec{y} \in \eqref{eq::smd::discretizationContraints}} L(\vec{y},\vec{\lambda})
=
\max_{\vec{\lambda}} D(\vec{\lambda})
\end{multline}
where $D(\vec{\lambda})$ is a \emph{Lagrangian dual function}:
\begin{equation}
\label{eq::smd::dualFunc}
D(\vec{\lambda}) = \min_{\vec{y}\in\eqref{eq::smd::discretizationContraints}} \biggl(E_I(\vec{y}) + \sum_{p \in \mathcal{P}}  \sum_{i\in\mathcal{V}}\lambda_i y_{ip}\biggr)-\sum_{i\in\mathcal{V}}\lambda_i.
\end{equation}
The dual function~$D(\vec{\lambda})$ is a minimum of a finite number of linear functions and thus is concave and piecewise-linear (non-smooth). A subgradient~$\vec{g} \in \mathbb{R}^{|\mathcal{V}|}$ of the dual~$D(\vec{\lambda})$ can be computed given the result of the inner pseudo-Boolean minimization:
\begin{equation}
\label{eq::dualSubgradient}
g_i = \sum_{p\in\mathcal{P}} y^*_{ip} - 1, \quad i \in \mathcal{V},
\end{equation}
where
$
\{y^*_{ip}\}_{i\in\mathcal{V}}^{p \in \mathcal{P}} = \argmin\limits_{\vec{y}\in\eqref{eq::smd::discretizationContraints}} \biggl(E_I(\vec{y}) + \sum\limits_{p \in \mathcal{P}}  \sum\limits_{i\in\mathcal{V}}\lambda_i y_{ip}\biggr).
$

The concavity of $D(\vec{\lambda})$ together with the ability to compute the subgradient makes it possible to tackle the problem of finding the best possible lower bound in family~\eqref{eq::smd} (i.e. the maximization of $D(\vec{\lambda})$) by convex optimization techniques.
The optimization methods  are discussed in sec.~\ref{sec::theorOptimization}.

Note, that if at some point~$\vec{\lambda}$ the subgradient given by eq.~\eqref{eq::dualSubgradient} equals zero it immediately follows that the inequality in~\eqref{eq::smd} becomes equality, i.e. there is no integrality gap and the certificate of optimality is provided. This effect happens because there exists a labeling~$\vec{y}^*$ delivering the minimum of~$L(\vec{y}, \vec{\lambda})$ w.r.t.~$\vec{y}$ and satisfying the consistency constraints~\eqref{eq::smd::consistencyConstraints}.

In a special case when all the pairwise potentials are associative (of form~\eqref{eq::smd::Potts}) minimization of the Lagrangian w.r.t.~$\vec{y}$  decomposes into $|\mathcal{P}|$ subproblems each containing only variables indexed by a particular label~$p$:
$$
\min_{\vec{y}\in\eqref{eq::smd::discretizationContraints}} L(\vec{y}, \vec{\lambda})
=
\sum_{p \in \mathcal{P}} \min_{\vec{y}_p\in\{0,1\}^{|\mathcal{V}|}} \Phi_p( \vec{y}_p, \vec{\lambda})
-
\sum_{i\in \mathcal{V}} \lambda_i
$$
where
$
\Phi_p( \vec{y}_p, \vec{\lambda}) =  \sum\limits_{i \in \mathcal{V}} (\theta_{ip} + \lambda_i) y_{ip}
-
\!\!\!\!\sum\limits_{\{i,j\}\in\mathcal{E}} \!\!C_{ij,p} y_{ip} y_{jp}.
$

In~this case the minimization tasks w.r.t $\vec{y}_p$ can be solved independently, e.g. in parallel.

\subsection{High-order MRFs \label{sec::highOrderMRFs}}
Consider a problem of minimizing the energy consisting of the high-order potentials~\eqref{eq::notation::energyMultilabel}.
The unconstrained minimization of energy~$E(\vec{x})$ over multi-label variables $\vec{x}$ is equivalent to the minimization of function~$E_I(\vec{y})$~\eqref{eq::notation::energyIndicators} over variables~$\vec{y}$ under the consistency constraints~\eqref{eq::smd::consistencyConstraints} and the discretization constraints~\eqref{eq::smd::discretizationContraints}.

For now let us assume that all the high-order potentials are \emph{pattern-based} and \emph{sparse}~\cite{Rother09highorder}, \cite{Komodakis09highorder}, i.e.
\begin{equation}
\label{eq::patternBased}
\theta_{C}(\vec{x}_C) =
\begin{cases}
\hat{\theta}_{C,\vec{d}}, \quad &\text{if $\vec{x}_C = \vec{d} \in \mathcal{D}_C$}, \\
0, \quad &\text{otherwise}.
\end{cases}
\end{equation}
Here set $\mathcal{D}_C$ contains some selected labelings of variables~$\vec{x}_C$ and all the values of the potential are non-positive, i.e.~$\hat{\theta}_{C,\vec{d}} \leq 0$, $\vec{d} \in \mathcal{D}_C$. In case of sparse potential the cardinality of set~$\mathcal{D}_C$ is much smaller compared to the number of all possible labelings of variables~$\vec{x}_C$, i.e $|\mathcal{D}_C| \ll |\mathcal{P}^{C}|$.

In this setting we can use the identity\\
\parbox{\columnwidth}{\centering
$
\left(-\prod_{\ell \in C} y_{\ell}\right)=\min_{z\in\{0,1\}}\left((|C|-1)z-\sum_{\ell \in C} y_{\ell} z\right)
$}\\
to perform a reduction\footnote{This transformation of binary high-order function~\eqref{eq::notation::energyIndicators} is in fact equivalent to a special case of  Type-II binary transformation of~\cite{Rother09highorder}, but in this form it was proposed much earlier (see e.g.~\cite{Ishikawa11} for a review).} of the high-order function~$E_I(\vec{y})$ to a pairwise function
in such a way that\\
\parbox{\columnwidth}{\centering
$\min_{\vec{y} \in \{0, 1\}^*} E_I(\vec{y}) = \min_{\vec{z} \in \{0, 1\}^*,\; \vec{y} \in \{0, 1\}^*} E_I^*(\vec{y}, \vec{z})$
}\\ where by $\{0, 1\}^*$ we denote the Boolean cubes of appropriate sizes and
\begin{multline}
\label{eq::smr::EYZsparseHighOrder}
E_I^*(\vec{y}, \vec{z}) =
-\sum_{C\in\mathcal{C}} \sum_{\vec{d}\in\mathcal{D}_C}
\hat{\theta}_{C,\vec{d}} \biggl((|C|-1) z_{C,\vec{d}}
\;- \\[-0.2cm]
\sum_{ \ell \in C} y_{\ell d_\ell} z_{C,\vec{d}} \biggr).
\end{multline}

Function~$E_I^*(\vec{y}, \vec{z})$ is pairwise and submodular w.r.t. binary variables $\vec{y}$ and $\vec{z}$ and thus in the absence of additional constraints can be efficiently minimized by graph cuts~\cite{Kolmogorov04}.
Adding and relaxing constraints~\eqref{eq::smd::consistencyConstraints} to the minimization of~\eqref{eq::notation::energyIndicators} gives us the following Lagrangian dual:
\begin{multline}
\label{eq::smr::dualHighOrder}
D(\vec{\lambda}) = \min_{\vec{y}, \vec{z}\in\{0,1\}^*}\!\!\!L( \vec{y}, \vec{z},\vec{\lambda}) = \\
\min_{\vec{y}, \vec{z} \in\{0,1\}^*} \biggl( E_I^*(\vec{y}, \vec{z}) + \sum_{i\in\mathcal{V}}\lambda_i\biggl(\sum_{p \in \mathcal{P}} y_{ip}-1\biggr)\biggr).
\end{multline}
For all $\vec{\lambda}$ Lagrangian~$L( \vec{y}, \vec{z},\vec{\lambda})$ is submodular w.r.t. $\vec{y}$ and $\vec{z}$  allowing us to efficiently evaluate $D$ at any point. Function~$D(\vec{\lambda})$ is a lower bound on the global minimum of energy~\eqref{eq::notation::energyMultilabel} and is concave but non-smooth as a function of~$\vec{\lambda}$. Analogously to~\eqref{eq::smd::dualFunc} the best possible lower bound can be found using convex optimization techniques.

\paragraph{Robust sparse pattern-based potentials.}
We can generalize the SMR approach to the case of \emph{robust sparse high-order potentials} using the ideas of \citet{kohli2009robust} and \citet{Rother09highorder}. Robust sparse high-order potentials in addition to rewarding the specific labelings $\vec{d} \in \mathcal{D}_C$ also reward labelings where the deviation from $\vec{d} \in \mathcal{D}_C$ is small.
We can formulate such a term for hyperedge $C$ in the following way:
\begin{equation}
\label{eq::patternBasedRobust}
\theta_C(\vec{x}_C) = \!\!\sum_{\vec{d} \in \mathcal{D}_C} \min \biggl( 0, \: \hat{\theta}_{C,\vec{d}} + \!\!\sum_{\ell \in C} w^C_\ell [x_\ell \neq d_\ell] \biggr)
\end{equation}
or equivalently in terms of indicator variables $\vec{y}$
\begin{equation}
\label{eq::patternBasedRobustBinary}
\theta_C(\vec{y}_C) = \!\!\sum_{\vec{d} \in \mathcal{D}_C} \min \biggl( 0, \: \hat{\theta}_{C,\vec{d}} + \!\!\sum_{\ell \in C} w^C_\ell (1 - y_{\ell d_\ell}) \biggr).
\end{equation}
The inner min of~\eqref{eq::patternBasedRobustBinary} can be transformed to pairwise form via adding switching variables $z_{C,\vec{d}}$ (analogous to \eqref{eq::smr::EYZsparseHighOrder}):
\[
\min_{z_{C,\vec{d}} \in \{0, 1\}} \quad \hat{\theta}_{C,\vec{d}} z_{C,\vec{d}} + \sum_{\ell \in C} w^C_\ell z_{C,\vec{d}} (1 - y_{\ell d_\ell}).
\]
This function is submodular w.r.t. variables $\vec{y}$ and $\vec{z}$ if coefficients $w^C_\ell$ are nonnegative. Introducing Lagrange multipliers and maximizing the dual function is absolutely analogous to the case of non-robust sparse high-order potentials.

One particular instance of robust pattern-based potentials is the robust $\mathcal{P}^n$-Potts model~\cite{kohli2009robust}.
Here the cardinality of sets~$\mathcal{D}_C$ equals the number of the labels~$|\mathcal{P}|$ and all the selected labelings correspond to the uniform labelings of variables~$\vec{x}_C$:
\begin{equation}
\label{eq::robustPnPotts}
\theta_C(\vec{x}_C) = \sum_{p \in \mathcal{P}} \min \biggl( 0, -\gamma + \sum_{\ell \in C} \gamma / Q [x_\ell \neq p] \biggr).
\end{equation}
Here~$\gamma \geq 0$ and $Q\in (0,|C|]$ are the model parameters. $\gamma$ represents the reward that energy gains if all the variables~$\vec{x}_C$ take equal values. $Q$ stands for the number of variables that are allowed to deviate from the selected label~$p$ before the penalty is truncated.\!\footnote{In the original formulation of the robust $\mathcal{P}^n$-Potts model~\cite{kohli2009robust} the minimum operation was used instead of the outer sum in~\eqref{eq::robustPnPotts}. The two schemes are equivalent if the cost of the deviation from the selected labelings if high enough, i.e. $Q < 0.5 |C|$.
The substitution of minimum with summation in similar context was used by~\citet{Rother09highorder}.
}

\paragraph{Positive values of~$\hat{\theta}_C$.}
Now we describe a simple way to generalize our approach to the case of general high-order potentials, i.e. potentials of form~\eqref{eq::patternBased} but with values~$\hat{\theta}_{C,\vec{d}}$ allowed to be positive.

First, note that constraints~\eqref{eq::smd::discretizationContraints} and~\eqref{eq::smd::consistencyConstraints} imply that for each hyperedge
$
C \in \mathcal{C}
$
equality
$
\sum_{\vec{p}\in\mathcal{P}^{C}} \prod_{\ell \in C}y_{ \ell p_\ell} = 1
$
holds.
Because of this fact we can subtract the constant~$M_C = \max_{\vec{d} \in \mathcal{D}_C} \hat{\theta}_C(\vec{d})$ from all values of potential~$\theta_C$ simultaneously without changing the minimum of the problem, i.e.
$$
\min_{\vec{y} \in \eqref{eq::smd::discretizationContraints}, \eqref{eq::smd::consistencyConstraints}}\!\! E_I(\vec{y})
=
\!\!\!\!\!\!\min_{\vec{y} \in \eqref{eq::smd::discretizationContraints}, \eqref{eq::smd::consistencyConstraints}}
\sum_{C\in\mathcal{C}}\biggl(M_C + \!\!\sum_{\vec{p}\in\mathcal{P}^{C}} \!\!\tilde{\theta}_C(\vec{p})  \prod_{\ell \in C}y_{ \ell p_\ell}\biggr)
$$
\noindent where $\tilde{\theta}_C(\vec{p}) = \theta_C(\vec{p}) - M_C$.
We have all $\tilde{\theta}_C(\vec{p})$ non-positive and therefore can apply the SMR technique described at the beginning of section~\ref{sec::highOrderMRFs}. Note, that in this case the number of variables~$\vec{z}$ required to add within the SMR approach can be exponential in the arity of hyper edge~$C$. Therefore the SMR is practical when either the arity of all hyperedges  is small or when there exists a compact representation such as~\eqref{eq::patternBased} or~\eqref{eq::patternBasedRobust}.

\subsection{Linear global constraints \label{sec::globalTerms}}
Linear constraints on the indicator variables form  an important type of global constraints.
The following constraints are special case of this class:
\begin{align}
\label{eq::strictAreaConstr}
&\sum\nolimits_{j\in\mathcal{V}}y_{jp}=c,\; \; \mbox{strict class size constraints,}\\
\notag 
&\sum\nolimits_{j\in\mathcal{V}}y_{jp}\in[c_1,c_2],\; \; \mbox{soft class size constraints,}\\
\notag 
&\sum\nolimits_{j\in\mathcal{V}}y_{jp}I_j=\sum\nolimits_{j\in\mathcal{V}}y_{jq}I_j,\; \; \mbox{flux equalities,}\\
\notag 
&
\sum\nolimits_{j\in\mathcal{V}}y_{jp}I_j=\vec{\mu}\sum\nolimits_{i\in\mathcal{V}}y_{ip},\; \; \mbox{mean,}\\
\notag 
&
\sum\nolimits_{j\in\mathcal{V}}\!y_{jp}(I_j\!-\!\vec\mu)(I_j\!-\!\vec\mu)^T\!=\!\sum\nolimits_{j\in\mathcal{V}}\!y_{ip}\Sigma,\;\; \mbox{variance.}
\end{align}
Here $I_j$ is the observable scalar or vector value associated with node~$j$, e.g. intensity or color of the corresponding image pixel.
There have been a number of works that separately consider constraints on total number of nodes that take a specific label (e.g. \cite{Kropotov10}).

The SMR~approach can easily handle general linear constraints such as
\begin{align}
\label{eq::globalConstr::equality}
\sum_{i\in\mathcal{V}} \sum_{p \in \mathcal{P}} w_{ip}^m y_{ip} &= c^m,\; \; m=1,\ldots,M, \\
\label{eq::globalConstr::inequality}
\sum_{i\in\mathcal{V}} \sum_{p \in \mathcal{P}} v_{ip}^k y_{ip} &\leq d^k,\; \; k=1,\ldots,K.
\end{align}

We introduce additional Lagrange multipliers $\vec{\xi} = \{ \xi_m \}_{m=1,\dots,M}$, $\vec{\pi} = \{ \pi_k \}_{k=1,\dots,K}$ for constraints \eqref{eq::globalConstr::equality}, \eqref{eq::globalConstr::inequality} and get the Lagrangian
\begin{align*}
L(\vec{y}, \vec{\lambda}, \vec{\xi}, \vec{\pi}) = &E_I(\vec{y})+ \sum_{i \in \mathcal{V}} \lambda_i \biggl( \sum_{p \in \mathcal{P}} y_{ip} -1 \biggr)
+
\\
&\sum_{m=1}^M \xi_m   \biggl( \sum_{i\in\mathcal{V}}  \sum_{p \in \mathcal{P}} w_{ip}^m y_{ip} - c^m \biggr)
+
\\
&\sum_{k=1}^K \pi_k   \biggl( \sum_{i\in\mathcal{V}} \sum_{p \in \mathcal{P}} v_{ip}^k y_{ip} - d^k \biggr)
\end{align*}
where inequality multipliers $\vec{\pi}$ are constrained to be nonnegative.
The Lagrangian is submodular w.r.t. indicator variables $\vec{y}$ and thus dual function
\begin{equation}
\label{eq::globalConstr::dualfunc}
D(\vec{\lambda}, \vec{\xi}, \vec{\pi}) = \min_{\vec{y}\in \eqref{eq::smd::discretizationContraints}} L(\vec{y}, \vec{\lambda}, \vec{\xi}, \vec{\pi})
\end{equation}
can be efficiently evaluated. The dual function is piecewise linear and concave w.r.t. all variables and thus can be treated similarly to the dual functions introduced earlier.

\section{Algorithms \label{sec::theorOptimization}}
In sec.~\ref{sec::optimization} we discuss the convex optimization methods that can be used to maximize the dual functions~$D(\vec{\lambda})$ constructed within the SMR approach.
In sec.~\ref{sec::marginals} we develop an alternative approach based on coordinate-wise ascent.

\subsection{Convex optimization \label{sec::optimization}}
We consider several optimization methods that might be used to maximize the SMR dual functions:
\begin{enumerate}
\item subgradient ascent methods~\cite{Komodakis10ddtrw};
\item bundle methods~\cite{Kappes12bundle};
\item non-smooth version of L-BFGS~\cite{Lewis08};
\item proximal methods~\cite{Zach12relax};
\item smoothing-based techniques~\cite{Savchynskyy11nesterov}, \cite{Zach12relax};
\item stochastic subgradient methods~\cite{Komodakis10ddtrw}.
\end{enumerate}

First, we begin with the methods that are not suited to SMR: proximal, smoothing-based and stochastic.
The first two groups of methods are not directly applicable for SMR because they require not only the black-box oracle that evaluates the function and computes the subgradient (in SMR this is done by solving the max-flow problem) but request additional computations: proximal methods~-- the prox-operator, log-sum-exp smoothing~\cite{Savchynskyy11nesterov}~-- solving the sum-product problem instead of max-sum, quadratic smoothing~\cite{Zach12relax} requires explicit computation of convex conjugates.\!\footnote{\citet{Zach12relax} directly construct the smooth approximation~$f_\varepsilon$ of a convex non-smooth function~$f$ by $f_\varepsilon = (f^* + \varepsilon\|\cdot\|^2_2/2)^*$ where $\varepsilon > 0$ is a smoothing parameter and $(\cdot)^*$ is the convex conjugate. In their case function~$f$ is a sum of summands of simple form for which analytical computation of convex conjugates is possible. In our case each summand is a min-cut LP, its conjugate is a max-flow LP. The addition of quadratic terms to $f^*$ makes $f_\varepsilon$ quadratic instead of a min-cut LP and thus efficient max-flow algorithms are no more applicable.}
We are not aware of possibilities of computing any of those in case of the SMR approach.

Stochastic techniques have proven themselves to be highly efficient for large-scale machine learning problems. \citet[sec.~13.3.2]{Bousquet12tradeoffs} show that stochastic methods perform well within the estimation-optimization tradeoff but are not so great as optimization methods for the empirical risk (i.e. the optimized function). In case of SMR we do not face the machine learning tradeoffs and thus stochastic subgradient is not suitable.

Further in this section we give a short review of methods of the three groups (subgradient, bundle, L-BFGS) that require only a black-box oracle that for an arbitrary value of dual variables $\vec{\lambda}$ computes the function value~$D(\vec{\lambda})$ and one subgradient~$\partial D(\vec{\lambda})$.

\textbf{Subgradient methods} at iteration~$n$ perform an update in the direction of the current subgradient
\[
\vec{\lambda}^{n+1} = \vec{\lambda}^{n} + \alpha^n \partial D(\vec{\lambda}^n).
\]
with a step size~$\alpha^n$.
\citet{Komodakis10ddtrw} and \citet{Kappes12bundle} suggest the following adaptive scheme of selecting the step size:
\begin{equation}
\label{eq::subgrAdaptiveStepSize}
\alpha^n = \gamma \frac{A^n - D(\vec{\lambda}^n)}{ \| \partial D(\vec{\lambda}^n) \|^2_2}
\end{equation}
where $A^n$ is the current estimate of the optimum (the best value of the primal solution found so far) and $\gamma$ is a parameter. Fig.~\ref{alg::subgradient} summarizes the overall method.

\begin{figure}[!tp]
\begin{algorithmic}[1]
\REQUIRE Lagrangian~$L(\vec{y}, \vec{\lambda})$~\eqref{eq::smd::lagr}, initialization~$\vec{\lambda}_0$;
\ENSURE $\vec{\lambda}^* = \argmax_{\vec{\lambda}} \min_{\vec{y}} L(\vec{y}, \vec{\lambda})$;
\STATE $\vec{\lambda} := \vec{\lambda}_0$; 
\REPEAT
    \STATE $\vec{y}^* := \argmin_{\vec{y}} L(\vec{y}, \vec{\lambda})$; \COMMENT{run max-flow}
    \STATE $D(\vec{\lambda}) = L(\vec{y}^*\!\!, \vec{\lambda})$; \COMMENT{evaluate the function}
    \FORALL{$i \in \mathcal{V}$}
        \STATE $g_i := \!\!\sum\limits_{p\in\mathcal{P}} y^*_{ip} - 1$;  \COMMENT{compute the subgradient}
    \ENDFOR
    \IF{ $\vec{g}=0$ }
        \RETURN{$\vec{\lambda}$}; \COMMENT{the certificate of optimality}
    \ELSE
        \STATE estimate the primal; \COMMENT{see sec.~\ref{sec::primalSolution}}
        \STATE choose stepsize $\alpha$; \COMMENT{e.g. using rule~\eqref{eq::subgrAdaptiveStepSize}}
        \STATE $\vec{\lambda} := \vec{\lambda} + \alpha \vec{g}$; \COMMENT{make the step}
    \ENDIF
\UNTIL{convergence criteria are met }
\RETURN{$\vec{\lambda}$};
\end{algorithmic}
\caption{Subgradient ascent algorithm to maximize the SMR dual~\eqref{eq::smd::dualFunc}. \label{alg::subgradient}}
\end{figure}

\textbf{Bundle methods} were analyzed by \citet{Kappes12bundle} in the context of MRF energy minimization. A bundle $\mathcal{B}$ is a collection of points~$\vec{\lambda}'$, values of dual function~$D(\vec{\lambda}')$, and sungradients~$\partial D(\vec{\lambda}')$. The next point is computed by the following rule:
\begin{equation}
\label{eq::bundleUpdateRule}
\vec{\lambda}^{n+1} = \argmax_{\vec{\lambda}} \; \left( \hat{D}(\vec{\lambda}) - \frac{w^n}{2} \| \vec{\lambda} - \bar{\vec{\lambda}}  \|_2^2 \right)
\end{equation}
where $\hat{D}(\vec{\lambda})$ is the upper bound on the dual function:
\[
\hat{D}(\vec{\lambda}) = \min_{ ( \vec{\lambda}^\prime, D(\vec{\lambda}^\prime), \partial D(\vec{\lambda}^\prime) ) \in \mathcal{B}} \bigl(  D(\vec{\lambda}^\prime) + \langle \partial D(\vec{\lambda}^\prime), \vec\lambda - \vec\lambda^\prime \rangle \bigr).
\]
Parameter~$w^n$ is intended to keep $\vec\lambda^{n+1}$ close to the current solution estimate~$\bar{\vec{\lambda}}$.
If the current step does not give a significant improvement than the \emph{null step} is performed: bundle~$\mathcal{B}$ is updated with $(\vec{\lambda}^{n+1}, D(\vec{\lambda}^{n+1}), \partial D(\vec{\lambda}^{n+1}))$. Otherwise the \emph{serious step} is performed: in addition to updating the bundle the current estimate $\bar{\vec{\lambda}}$ is replaced by $\vec{\lambda}^{n+1}$. \citet{Kappes12bundle} suggest to use the relative improvement of $D( \vec{\lambda}^{n+1} )$ and $\hat{D}( \vec{\lambda}^{n+1} )$ over $D( \bar{\vec{\lambda}} )$ (threshold $m_L$) to choose the type of the step. Inverse step size $w^n$ can be chosen adaptively if the previous step was serious:
\begin{equation}
\label{eq::bundleStepSize}
w^n = P_{[ w_{\min}, w_{\max} ]} \left( \gamma \frac{A^n - \max_{ k = 1,\dots,n } D(\vec{\lambda}^k) }{ \| \partial D(\vec{\lambda}^n) \|_2 }  \right)^{-1}
\end{equation}
where $P_{[ w_{\min}, w_{\max} ]}$ is the projection on the segment. In case of the null step $w^n = w^{n-1}$.
In summary, the bundle method has the following parameters: $\gamma$, $w_{\min}$, $w_{\max}$, $m_L$, and maximum size of the bundle~-- $b_s$. Note that the update~\eqref{eq::bundleUpdateRule} implicitly estimates both the direction~$\vec{g}^n$ and the step size~$\alpha^n$.

Another algorithm analyzed by \citet{Kappes12bundle} is the \emph{aggregated bundle} method proposed by \citet{Kiwiel83} with the same step-size rule~\eqref{eq::bundleStepSize}.
The method has the following parameters: $\gamma$, $w_{\min}$, $w_{\max}$, and a threshold to choose null or serious step~-- $m_r$. Please see works \cite{Kappes12bundle}, \cite{Kiwiel83} for details.
We have also tried a combined strategy: LMBM method\footnote{\url{http://napsu.karmitsa.fi/lmbm/lmbmu/lmbm-mex.tar.gz}}\!~\cite{Haarala07} where we used only one non-default parameter: the maximum size of a bundle~$b_s$.

\textbf{L-BFGS methods} choose the direction of the update using the history of function evaluations:
$S^n \partial D(\vec{\lambda}^{n})$
where $S^n$ is a negative semi-definite estimate of the inverse Hessian at $\vec{\lambda}^{n}$. The step size~$\alpha^n$ is typically chosen via approximate one-dimensional optimization, a.k.a. line search.
\citet{Lewis08} proposed a specialized version of L-BFGS for non-smooth functions implemented in HANSO library.\!\footnote{\url{http://www.cs.nyu.edu/overton/software/hanso/}} In this code we varied only one parameter: the maximum rank of Hessian estimator~$h_r$.

\subsection{Coordinate-wise optimization \label{sec::marginals}}
\begin{figure}[!tp]
\begin{algorithmic}[1]
\REQUIRE Lagrangian~$L(\vec{y}, \vec{\lambda})$~\eqref{eq::smd::lagr}, initialization~$\vec{\lambda}_0$;
\ENSURE coordinate-wise maximum of $D(\vec{\lambda})$;
\STATE $\vec{\lambda} := \vec{\lambda}_0$;
\REPEAT
    \STATE $\text{converged} := \text{true}$;
    \FORALL{$j \in \mathcal{V}$}
        \FORALL{$p \in \mathcal{P}$}
            \STATE compute $MM_{jp,0}  L(\vec{y}, \vec{\lambda})$, $MM_{jp,1}  L(\vec{y}, \vec{\lambda})$;
            \STATE $\delta_p^j := MM_{jp,0} \; L(\vec{y}, \vec{\lambda}) - MM_{jp,1} \; L(\vec{y}, \vec{\lambda})$;
        \ENDFOR
        \STATE find~$\delta_{(1)}^j$ and $\delta_{(2)}^j$;
        \IF{ $\delta_{(1)}^j $ and $\delta_{(2)}^j$ of the same sign}
            \STATE $\lambda_j := \lambda_j + 0.5(\delta_{(1)}^j + \delta_{(2)}^j)$;
            \STATE $\text{converged} := \text{false}$;
        \ENDIF
    \ENDFOR
\UNTIL{ $\text{converged} \neq \text{true}$};
\RETURN{$\vec{\lambda}$};
\end{algorithmic}
\caption{Coordinate ascent algorithm for maximizing the dual~$D(\vec{\lambda})$~\eqref{eq::smd::dualFunc} in case of pairwise associative potentials. \label{alg::minMarginals}}
\end{figure}

An alternative approach to maximize the dual function~$D(\vec{\lambda})$ consists in selecting a group of variables and finding the maximum w.r.t them. This strategy was used in max-sum diffusion~\cite{Werner07} and MPLP~\cite{Globerson07mplp} algorithms.
In this section we derive an analogous algorithm for the SMR in case of pairwise associative potentials.
The algorithm is based on the concept of min-marginal averaging used in~\cite{Wainwright05trw}, \cite{Kolmogorov06trws} to optimize the TRW dual function.

Consider some value~$\vec{\lambda}^{old}$ of dual variables~$\vec{\lambda}$ and a selected node~$j \in \mathcal{V}$. We will show how to construct a new value~$\vec{\lambda}^{new}$ such that~$D(\vec{\lambda}^{new}) \geq D(\vec{\lambda}^{old})$. The new value will differ from the old value only in one coordinate~$\lambda_j$, i.e. $\lambda_{i}^{new} = \lambda_{i}^{old}$, $i \neq j$.

Consider the \emph{unary min-marginals} of Lagrangian~\eqref{eq::smd::lagr}
$$
MM_{jp,k} L\left(\vec{y}, \vec{\lambda}^{old}\right)=\min_{\vec{y} \in \eqref{eq::smd::discretizationContraints},\; y_{jp} = k} L(\vec{y}, \vec{\lambda}^{old})
$$
where $k \in \{0,1\}$,  $p \in \mathcal{P}$.
The min-marginals can be computed efficiently using the dynamic graphcut framework~\cite{Kohli07dynamic}.
Denote the differences of min-marginals for labels~$0$ and~$1$ with~$\delta_p^j$:
$
\delta_{p}^j = MM_{jp,0} L\left(\vec{y}, \vec{\lambda}^{old}\right) - MM_{jp,1} L(\vec{y}, \vec{\lambda}^{old})
$. Let~$\delta_{(1)}^j$ be the maximum of~$\delta_{p}^j$,  $\delta_{(2)}^j$~-- the next maximum; $p^{(1)}_j$ and~$p^{(2)}_j$~-- the corresponding indices: $p^{(1)}_j = \argmax_{p \in \mathcal{P}} \delta_{p}^j$, $\delta_{(1)}^j = \delta_{p^{(1)}_j}^j$, $p^{(2)}_j = \argmax_{p \in \mathcal{P} \setminus \{ p^{(1)}_j \}} \delta_{p}^j$, $\delta_{(2)}^j = \delta_{p^{(2)}_j}^j$.
Let us construct the new value~$\lambda^{new}_j$ of dual variable~$\lambda_j$ with the following rule:
\begin{equation}
\label{eq::minMarginalUpdate}
\lambda_{j}^{new} = \lambda_{j}^{old} + \Delta_j
\end{equation}
where $\Delta_j$ is a number from segment~$\left[ \delta_{(2)}^j, \delta_{(1)}^j \right]$.

\begin{theorem}\label{theor::marginAveraging}
Rule~\eqref{eq::minMarginalUpdate} when applied to a pairwise energy with associative pairwise potentials delivers the maximum of function $D(\vec{\lambda})$ w.r.t. variable~$\lambda_j$.
\end{theorem}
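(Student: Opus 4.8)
The plan is to reduce the statement to a one-dimensional concave maximization by exploiting the label-wise decomposition of the Lagrangian that holds for associative potentials. Since only the single coordinate $\lambda_j$ is varied, I would parametrize $\lambda_j^{new} = \lambda_j^{old} + t$ and study $D$ as a function of $t$ alone. The starting point is the identity $L(\vec{y}, \vec{\lambda}^{new}) = L(\vec{y}, \vec{\lambda}^{old}) + t(\sum_{p \in \mathcal{P}} y_{jp} - 1)$ together with the decomposition $\min_{\vec{y}} L(\vec{y}, \vec{\lambda}) = \sum_{p \in \mathcal{P}} \min_{\vec{y}_p} \Phi_p(\vec{y}_p, \vec{\lambda}) - \sum_{i} \lambda_i$ given in the text; the key structural observation is that $\lambda_j$ enters the subproblem $\Phi_p$ only through the coefficient of the single variable $y_{jp}$.

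First I would introduce the per-label min-marginals $\phi_p(s) = \min_{\vec{y}_p : y_{jp} = s} \Phi_p(\vec{y}_p, \vec{\lambda}^{old})$ for $s \in \{0,1\}$ and observe that the full min-marginals factor as $MM_{jp,k} L = \phi_p(k) + \sum_{p' \neq p} \min_s \phi_{p'}(s) - \sum_{i} \lambda_i^{old}$, because the subproblems for labels $p' \neq p$ are unconstrained at node $j$ and decouple. The additive constant then cancels in the difference, yielding the clean identity $\delta_p^j = \phi_p(0) - \phi_p(1)$. This is the crux of the argument: it identifies each reported min-marginal difference $\delta_p^j$ with the breakpoint of the corresponding label subproblem.

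Next I would write the dual explicitly as a function of $t$. Increasing $\lambda_j$ by $t$ raises the cost of $y_{jp} = 1$ by exactly $t$ in subproblem $p$ while leaving $y_{jp} = 0$ untouched, so $D(\lambda_j^{old} + t) = \sum_{p \in \mathcal{P}} \min(\phi_p(0), \phi_p(1) + t) - \sum_{i} \lambda_i^{old} - t$. Each summand $\min(\phi_p(0), \phi_p(1) + t)$ is concave and piecewise-linear, with unit slope for $t < \delta_p^j$ and zero slope for $t > \delta_p^j$; summing and subtracting the linear term gives right derivative $\#\{p : \delta_p^j > t\} - 1$ and left derivative $\#\{p : \delta_p^j \ge t\} - 1$.

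Finally I would read off the maximizer from the subgradient optimality condition for a concave function: $t$ is optimal iff the left derivative is nonnegative and the right derivative nonpositive, i.e. $\#\{p : \delta_p^j \ge t\} \ge 1$ and $\#\{p : \delta_p^j > t\} \le 1$. The first inequality is equivalent to $t \le \delta_{(1)}^j$ and the second to $t \ge \delta_{(2)}^j$, so the set of maximizers is exactly the segment $[\delta_{(2)}^j, \delta_{(1)}^j]$, matching rule~\eqref{eq::minMarginalUpdate}. The main obstacle is correctly tracking the $-1$ slope contributed by the constant term $-\sum_{i} \lambda_i$: it is precisely what shifts the optimality condition from ``no subproblem prefers assigning $y_{jp}=1$'' to ``exactly one does,'' and thereby pins the answer to the interval between the two largest values $\delta_p^j$ rather than to the single largest. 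I would also address ties among the $\delta_p^j$ and the segment endpoints, but these merely collapse the optimal segment to a point and leave the conclusion unchanged.
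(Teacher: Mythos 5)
Your proposal is correct and follows essentially the same route as the paper: both exploit the label-wise decomposition $\min_{\vec{y}}L=\sum_{p}\min_{\vec{y}_p}\Phi_p-\sum_i\lambda_i$ and the fact that $\lambda_j$ enters subproblem $p$ only through the coefficient of $y_{jp}$, so that $\delta_p^j$ is the breakpoint of $\min(\phi_p(0),\phi_p(1)+t)$ (the paper isolates this as a min-marginal lemma). The only difference is cosmetic: the paper concludes by a two-case comparison of $D(\vec{\lambda}^{new})$ against an arbitrary competitor $D(\vec{\lambda}^{\delta})$, whereas you write the explicit one-dimensional piecewise-linear formula and apply the subgradient optimality condition, which additionally identifies $[\delta_{(2)}^j,\delta_{(1)}^j]$ as the exact set of maximizers.
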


The proof relies on the intuition that rule~\eqref{eq::minMarginalUpdate} directly assigns the min-marginals $MM_{jp,k} L\left(\vec{y}, \vec{\lambda}\right)$, $p \in \mathcal{P}$, $k\in \{0,1\}$ such values that only one subproblem~$p$ has $y_{j,p}$ equal to~$1$.
We provide the full proof in the supplementary material.

Theorem~\ref{theor::marginAveraging} allows us to construct the coordinate-ascent algorithm (fig.~\ref{alg::minMarginals}). This result cannot be extended to the cases of general pairwise or high-order potentials. We are not aware of analytical updates like~\eqref{eq::minMarginalUpdate} for these cases. The update w.r.t. single variable can be computed numerically
but the resulting algorithm would be very slow.

\subsection{Getting a primal solution \label{sec::primalSolution}}
Obtaining a primal solution given some value of the dual variables is an important practical issue related to the Lagrangian relaxation framework.
There are two different tasks: obtaining the feasible primal point of the relaxation and obtaining the final discrete solutions.
The first task was studied by \citet{Savchynskyy11nesterov} and \citet{Zach12relax} for algorithms based on the smoothing of the dual function.
Recently, \citet{Savchynskyy12primalSolution} proposed a general approach that can be applied to SMR whenever a subgradient algorithm is used to maximize the dual.
The second aspect is usually resolved using some heuristics.
\citet{Osokin12hipot} propose an heuristic method to obtain the discrete primal solution within the SMR approach.
Their method is based on the idea of block-coordinate descent w.r.t. the primal variables and is similar to the analogous heuristics in TRW-S~\cite{Kolmogorov06trws}.

\section{Theoretical analysis \label{sec::theory}}
In this section we explore some properties of the maximum points of the dual functions and express their maximum values explicitly in the form of LPs. The latter allows us to reason about tightness of the obtained lower bounds and to provide analytical comparisons against other approaches.
\subsection{Definitions and theoretical properties}
Denote the possible optimal labelings of binary variable $y_{ip}$ associated with node~$i$ and label~$p$ given Lagrange multiplyers~$\vec{\lambda}$ with
\begin{equation*}
Z_{ip}(\vec{\lambda}) =
\bigl\{ z \mid \exists \vec{y}^{\vec{\lambda}} \in \Argmin_{\vec{y} \in \eqref{eq::smd::discretizationContraints} } L(\vec{y}, \vec{\lambda})  \; : \; y^{\vec{\lambda}}_{ip}=z  \bigr\}
\end{equation*}
where~$L(\vec{y}, \vec{\lambda})$ is the Lagrangian constructed within the SMR approach~\eqref{eq::smd::lagr}.

\begin{definition}
Point $\vec{\lambda}$ satisfies the \emph{strong agreement condition} if for all nodes~$i$, for one label $p$ set~$Z_{ip}(\vec{\lambda})$ equals~$\{1\}$ and for the other labels it equals~$\{0\}$:
\[
\forall i\!\in\!\mathcal{V} \; \exists!p\!\in\!\mathcal{P}: \; Z_{ip}(\vec{\lambda}) = \{1\}, \; \forall q \neq p \; Z_{iq}(\vec{\lambda})=\{0\}.
\]
\end{definition}
This definition implies that for a strong agreement point $\vec{\lambda}$ there exists the unique binary labeling $\vec{y}$ consistent with all sets $Z_{ip}(\vec{\lambda})$ ($y_{ip} \in Z_{ip}(\vec{\lambda})$) and with consistency constraints~\eqref{eq::smd::consistencyConstraints}. Consequently, such $\vec{y}$ defines labeling $\vec{x} \in \mathcal{P}^\mathcal{V}$ that delivers the global minimum of the initial energy~\eqref{eq::notation::energyMultilabel}.


\begin{definition}
Point~$\vec{\lambda}$ satisfies the \emph{weak agreement condition} if for all nodes~$i\in \mathcal{V}$ both statements hold true:
\begin{enumerate}
\item $\exists p \in \mathcal{P}:\; 1 \in Z_{ip}(\vec{\lambda})$.
\item $\forall p \in \mathcal{P}:\; Z_{ip}(\vec{\lambda})=\{1\} \Rightarrow \forall q\ne p, \; 0 \in Z_{jq}(\vec{\lambda})$.
\end{enumerate}
\end{definition}

This definition means that for a weak agreement point~$\vec{\lambda}$ there exists a binary labeling $\vec{y}$ consistent with all sets $Z_{ip}(\vec{\lambda})$ and consistency constraints~\eqref{eq::smd::consistencyConstraints}.
It is easy to see that the strong agreement condition implies the weak agreement condition.

Denote a maximum point of dual function~$D(\vec{\lambda})$~\eqref{eq::smd::dualFunc} by $\vec{\lambda}^*$ and an assignment of the primal variables that minimize Lagrangian~$L(\vec{y}, \vec{\lambda}^*)$ by $\vec{y}^*$. We prove the following property of~$\vec{\lambda}^*$ in the supplementary material.
\begin{theorem}
\label{theor::wac}
A maximum point~$\vec{\lambda}^*$ of the dual function~$D(\vec{\lambda})$ satisfies the weak agreement condition.
\end{theorem}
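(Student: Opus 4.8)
The plan is to combine two ingredients: the first-order optimality condition for the concave dual $D$, and the lattice structure of the minimizers of a submodular function. Throughout, write $g_i(\vec{y}) = \sum_{p\in\mathcal{P}} y_{ip} - 1$ for the $\vec\lambda$-gradient of $L(\vec{y},\vec{\lambda})$ (cf.~\eqref{eq::dualSubgradient}).

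First I would record what optimality of $\vec{\lambda}^*$ gives us. For each fixed binary $\vec{y}$ the Lagrangian~\eqref{eq::smd::lagr} is affine in $\vec{\lambda}$ with gradient $g(\vec{y})$, so $D(\vec{\lambda}) = \min_{\vec{y}\in\eqref{eq::smd::discretizationContraints}} L(\vec{y},\vec{\lambda})$ is a minimum of finitely many affine functions, hence concave and piecewise linear, and its subdifferential at any point is the convex hull of the subgradients of the active pieces, i.e. of $\{ g(\vec{y}) : \vec{y} \in \Argmin_{\vec{y}\in\eqref{eq::smd::discretizationContraints}} L(\vec{y}, \vec{\lambda}^*) \}$. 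Since $\vec{\lambda}^*$ maximizes the concave $D$, the zero vector is a subgradient, $\vec{0} \in \partial D(\vec{\lambda}^*)$. Thus there exist minimizers $\vec{y}^{(1)}, \dots, \vec{y}^{(m)}$ of $L(\cdot, \vec{\lambda}^*)$ and convex weights $\alpha_k > 0$ with $\sum_k \alpha_k = 1$ and $\sum_k \alpha_k\, g(\vec{y}^{(k)}) = \vec{0}$, which coordinate-wise reads $\sum_k \alpha_k \sum_{p\in\mathcal{P}} y^{(k)}_{ip} = 1$ for every node $i \in \mathcal{V}$.

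Next I would invoke submodularity. For every $\vec{\lambda}$ the function $L(\cdot, \vec{\lambda})$ is submodular in $\vec{y}$, and the set of minimizers of a submodular function is closed under coordinate-wise $\wedge$ and $\vee$; in particular the coordinate-wise minimum $\vec{y}^{\min}$ and maximum $\vec{y}^{\max}$ over all minimizers are themselves minimizers of $L(\cdot, \vec{\lambda}^*)$, and $\vec{y}^{\min} \le \vec{y}^{(k)} \le \vec{y}^{\max}$ coordinate-wise for each $k$. These two extremal minimizers describe the sets $Z_{ip}(\vec{\lambda}^*)$ exactly: $0 \in Z_{ip}(\vec{\lambda}^*)$ if and only if $y^{\min}_{ip} = 0$, and $1 \in Z_{ip}(\vec{\lambda}^*)$ if and only if $y^{\max}_{ip} = 1$; consequently $Z_{ip}(\vec{\lambda}^*) = \{1\}$ if and only if $y^{\min}_{ip} = 1$. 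To finish, I would combine the two pieces: since $\sum_p y^{\min}_{ip} \le \sum_p y^{(k)}_{ip} \le \sum_p y^{\max}_{ip}$ and $\sum_k \alpha_k = 1$, the identity from the first step sandwiches, for every $i \in \mathcal{V}$,
\[
\sum_{p\in\mathcal{P}} y^{\min}_{ip} \;\le\; \sum_k \alpha_k \sum_{p\in\mathcal{P}} y^{(k)}_{ip} \;=\; 1 \;\le\; \sum_{p\in\mathcal{P}} y^{\max}_{ip}.
\]
The right inequality forces $y^{\max}_{ip} = 1$ for some $p$, i.e. $1 \in Z_{ip}(\vec{\lambda}^*)$, which is the first weak-agreement statement. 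The left inequality, together with $y^{\min}_{ip} \in \{0,1\}$, shows at most one label can have $y^{\min}_{ip} = 1$; so whenever $Z_{ip}(\vec{\lambda}^*) = \{1\}$ we get $y^{\min}_{iq} = 0$, hence $0 \in Z_{iq}(\vec{\lambda}^*)$, for all $q \ne p$, which is the second statement.

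The step I expect to be most delicate is the passage through the subdifferential: correctly identifying $\partial D(\vec{\lambda}^*)$ with the convex hull of the active subgradients and extracting the convex combination $\sum_k \alpha_k\, g(\vec{y}^{(k)}) = \vec{0}$. The lattice property of submodular minimizers is classical, but it must be invoked explicitly so that $\vec{y}^{\min}$ and $\vec{y}^{\max}$ are genuine minimizers rather than mere coordinate-wise bounds, since the characterization of $Z_{ip}(\vec{\lambda}^*)$ and hence the entire argument rests on this.
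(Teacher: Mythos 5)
Your proof is correct, but it takes a genuinely different route from the paper's. The paper argues by contradiction and explicit perturbation: if the weak agreement condition fails at some node~$j$, then either every minimizer of $L(\cdot,\vec{\lambda}^*)$ has $y_{jp}=y_{jq}=1$ for two distinct labels (so every active subgradient has $g_j\ge 1$), or every minimizer has $\sum_p y_{jp}=0$ (so $g_j=-1$); in either case moving $\lambda_j$ by a small $\varepsilon$ in the appropriate direction strictly increases $D$, contradicting maximality. Your argument is the convex-analytic dual of this: you express $\vec{0}\in\partial D(\vec{\lambda}^*)$ as a convex combination of active subgradients and read the weak agreement condition off the identity $\sum_k\alpha_k\sum_p y^{(k)}_{ip}=1$ via a sandwich between the coordinate-wise extremes of the minimizers. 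Both proofs ultimately rest on the same fact about a pointwise minimum of finitely many affine functions; yours quotes the subdifferential formula and gets a clean direct argument, the paper's avoids quoting it at the cost of an explicit $\varepsilon$ step-size estimate. Your version also yields slightly more structure as a by-product: the upper envelope of the minimizers assigns at least one label per node and the lower envelope at most one.

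One correction to your own assessment of where the delicacy lies: the lattice property of submodular minimizers is not actually needed, and the argument does not rest on $\vec{y}^{\min}$ and $\vec{y}^{\max}$ being genuine minimizers. The equivalences $1\in Z_{ip}(\vec{\lambda}^*)\Leftrightarrow y^{\max}_{ip}=1$, $0\in Z_{ip}(\vec{\lambda}^*)\Leftrightarrow y^{\min}_{ip}=0$, and $Z_{ip}(\vec{\lambda}^*)=\{1\}\Leftrightarrow y^{\min}_{ip}=1$ hold by the mere definition of the coordinate-wise maximum and minimum over the (finite, nonempty) set of minimizers, and the sandwich $\sum_p y^{\min}_{ip}\le 1\le\sum_p y^{\max}_{ip}$ only uses that these vectors bound every $\vec{y}^{(k)}$. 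Dropping the submodularity invocation shortens the proof and, like the paper's argument, makes it independent of whether the Lagrangian is submodular; the only load-bearing step is the identification of $\partial D(\vec{\lambda}^*)$ with the convex hull of the active subgradients, which you state correctly.
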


In general the three situations are possible at the global maximum of the dual function of SMR (see fig.~\ref{fig::roofs} for illustrations):
\begin{enumerate}
\item The strong agreement holds, binary labeling~$\vec{y}^*$  satisfying consistency constraints~\eqref{eq::smd::consistencyConstraints} (thus labeling $\vec{x}^*$ as well) can be easily reconstructed, there is no integrality gap.
\item Some (maybe all) sets $Z_{ip}(\vec{\lambda}^*)$ consist of multiple elements, $\vec{x}^*$ consistent with~\eqref{eq::smd::consistencyConstraints} is non-trivial to reconstruct, but there is still no integrality gap.
\item Some sets $Z_{ip}(\vec{\lambda}^*)$ consist of multiple elements, labeling~$\vec{y}^*$ that is both consistent with~\eqref{eq::smd::consistencyConstraints} and is the minimum of the Lagrangian does not exist, the integrality gap is non-zero.
\end{enumerate}

In situations~1 and~2 there exists a binary labeling $\vec{y}^*$ that simultaneously satisfies consistency constraints~\eqref{eq::smd::consistencyConstraints} and delivers the minimum of the Lagrangian~$L(\vec{y}, \vec{\lambda}^*)$. Labeling~$\vec{y}^*$ defines the horizontal hyperplane that contains point~$(\vec{\lambda}^*, D(\vec{\lambda}^*))$.

In situation~1 labeling~$\vec{y}^*$ is easy to find because the subgradient defined by~\eqref{eq::dualSubgradient} equals zero.
Situation~2 can be trickier but in certain cases might be resolved exactly.
Some examples are given in \cite{Osokin11cvpr}.

\begin{figure}[t]
\centering
 \begin{tabular}{@{}l@{\;\;}l@{\;\;}l}
 {\footnotesize $D(\vec{\lambda})$} &
 {\footnotesize $D(\vec{\lambda})$} &
 {\footnotesize $D(\vec{\lambda})$} \\
 \includegraphics[width=2.3cm]{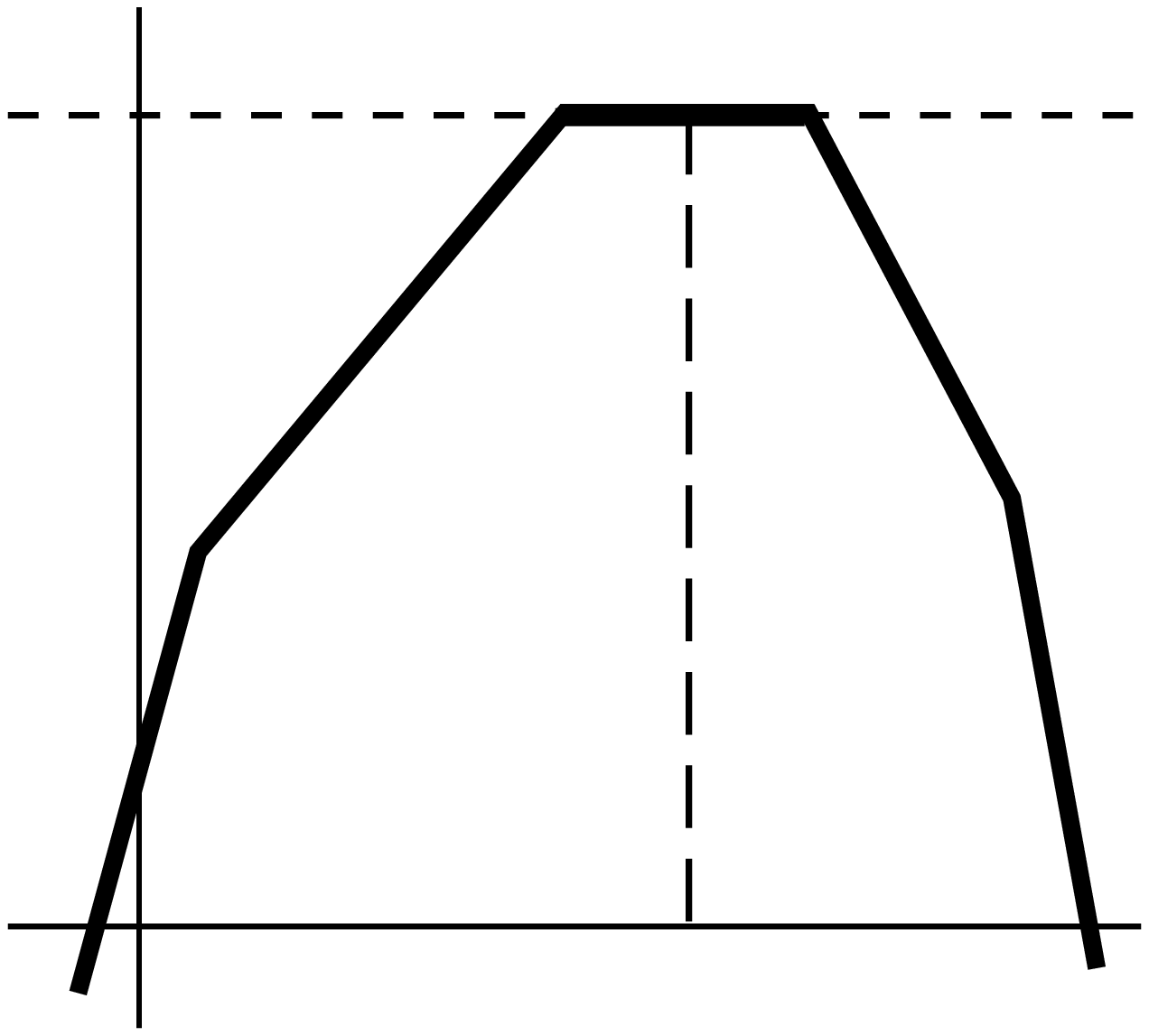} &
 \includegraphics[width=2.3cm]{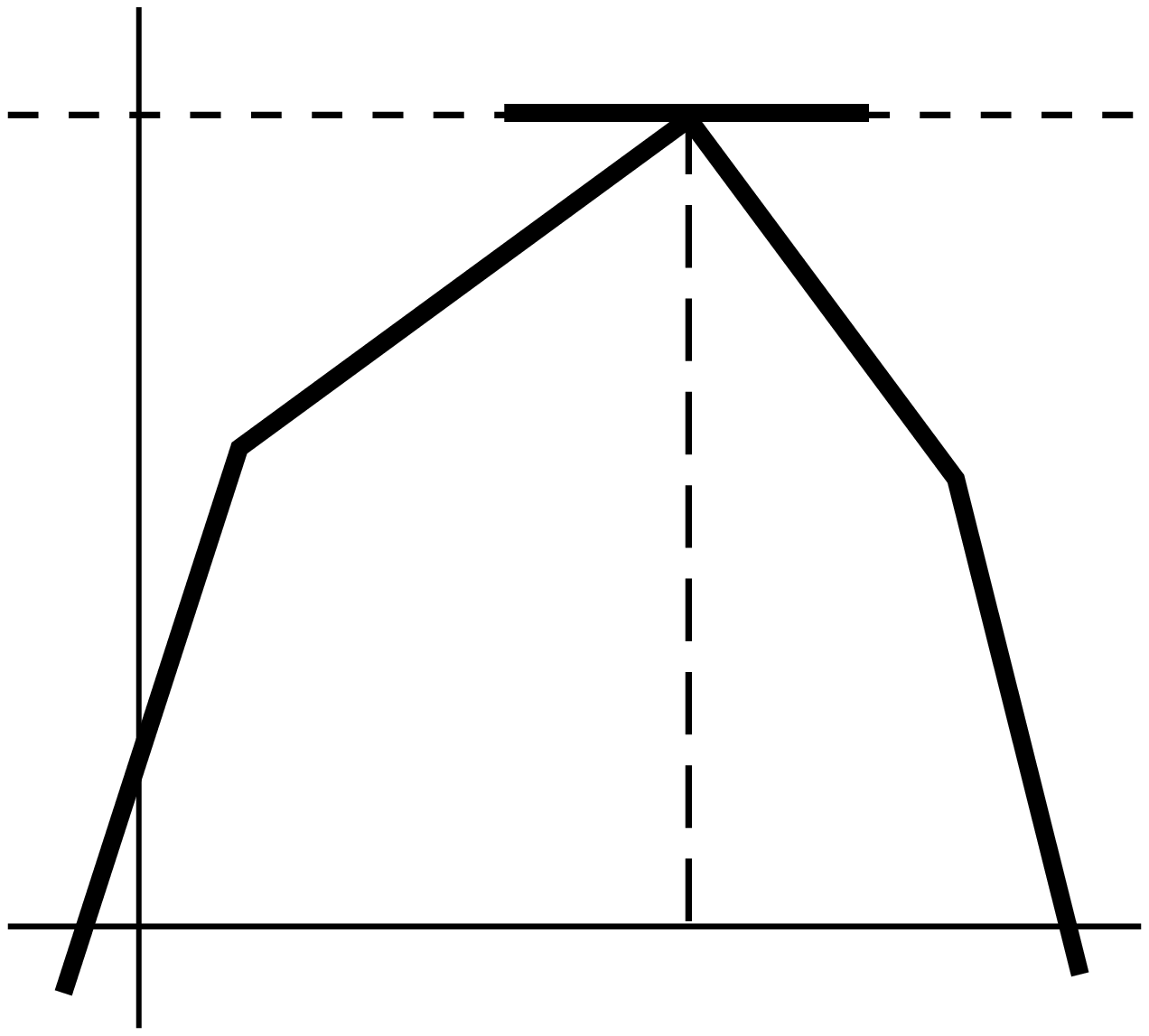} &
 \includegraphics[width=2.3cm]{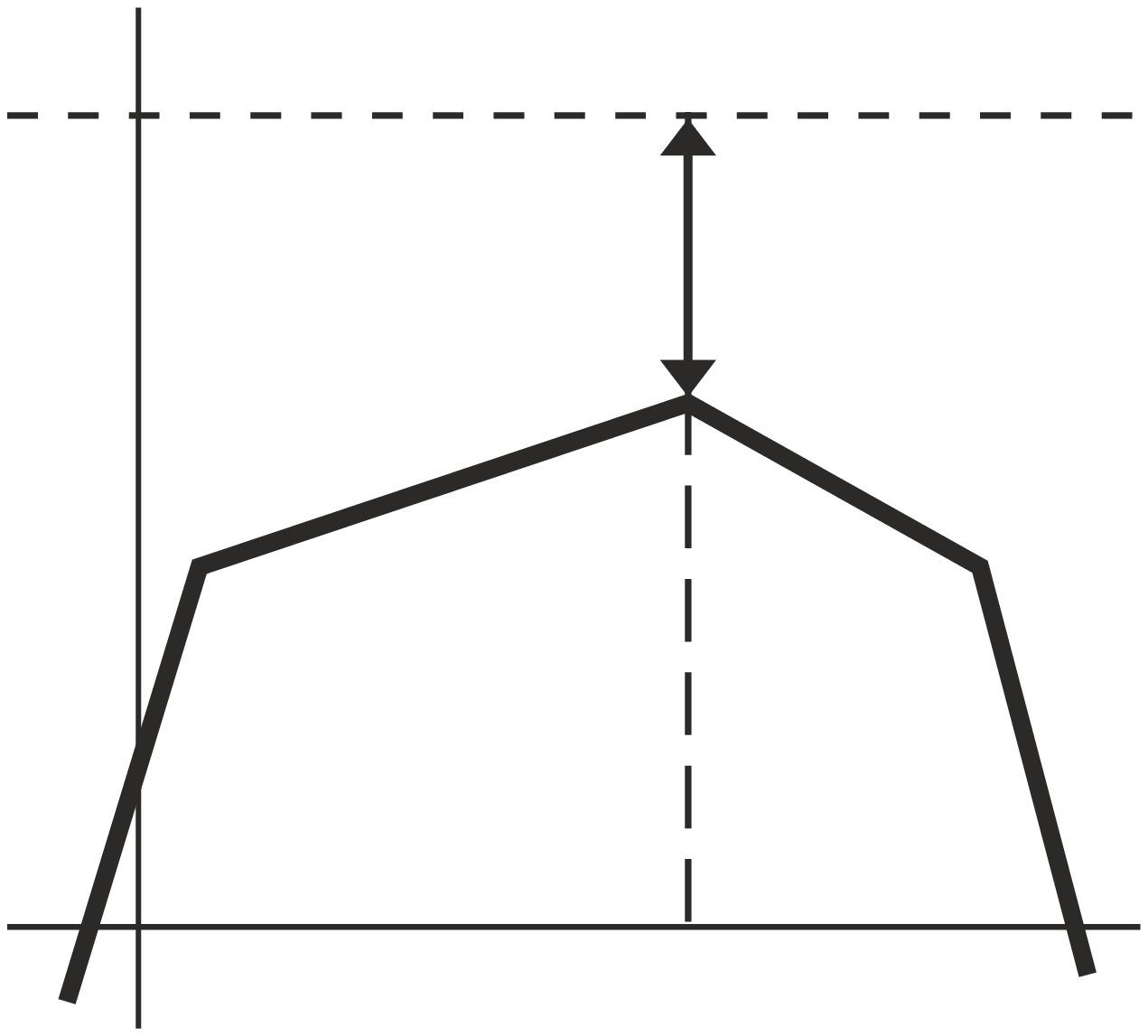} \vspace{-2.1cm}\\
 & &\;\;\;\;\;\;\;\;\;\;\;\;\;\;\;\parbox{1.3cm}{\begin{center}{\scriptsize integrality}\\[-2mm]  {\scriptsize gap}\end{center}}\vspace{1.0cm}\\
 \;\;\;\;\;\;\;\;\;\;\;\;\;{\footnotesize $\vec{\lambda}^*$}\;\;\;\;{\footnotesize $\vec{\lambda}$}&
 \;\;\;\;\;\;\;\;\;\;\;\;\;{\footnotesize $\vec{\lambda}^*$}\;\;\;\;{\footnotesize $\vec{\lambda}$}&
 \;\;\;\;\;\;\;\;\;\;\;\;\;{\footnotesize $\vec{\lambda}^*$}\;\;\;\;{\footnotesize $\vec{\lambda}$}\\[-0.2cm]
 (1) & (2) & (3) \\
\end{tabular}
\caption {The three possible cases at the maximum of the SMR dual function. The optimal energy value is shown by the horizontal dotted line. Solid lines show faces of the hypograph of the dual function~$D(\vec{\lambda})$.
\label{fig::roofs}}
\end{figure}


In practice the convex optimization methods described in sec.~\ref{sec::optimization} typically
converge to the point that satisfies the strong agreement condition rather quickly when there is no integrality gap. In case of the non-zero integrality gap all the methods start to oscillate around the optimum and do not provide a point satisfying even the weak agreement condition.
The coordinate-wise algorithm (fig.~\ref{alg::minMarginals}) often converges to points with low values of the dual function (there exist multiple coordinate-wise maximums of the dual function) but allows to obtain a point that satisfies the weak agreement condition. This result is summarized in theorem~\ref{theor::minMarginalWeakAgreement} proven in the supplementary material.
\begin{theorem}
\label{theor::minMarginalWeakAgreement}
Coordinate ascent algorithm (fig.~\ref{alg::minMarginals}) when applied to a pairwise energy with associative pairwise potentials returns a point that satisfies the weak agreement condition.
\end{theorem}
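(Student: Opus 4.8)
The plan is to translate the weak agreement condition at each node into an elementary statement about the signs of the min-marginal differences $\delta_p^j$, and then to observe that this statement is exactly the negation of the update test in the algorithm of fig.~\ref{alg::minMarginals}. Since a returned $\vec{\lambda}$ is, by construction, a point at which a full sweep over $\mathcal{V}$ triggers no update, the condition will then hold simultaneously at every node. (That the sweep is a well-posed monotone ascent follows from Theorem~\ref{theor::marginAveraging}.)

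First I would exploit the fact that for associative pairwise potentials the Lagrangian decouples across labels, $\min_{\vec{y}} L(\vec{y},\vec{\lambda}) = \sum_{p\in\mathcal{P}} \min_{\vec{y}_p} \Phi_p(\vec{y}_p,\vec{\lambda}) - \sum_{i\in\mathcal{V}} \lambda_i$. Fixing $y_{jp}=k$ constrains only the subproblem indexed by $p$, so $\delta_p^j = MM_{jp,0}L - MM_{jp,1}L$ equals the min-marginal gap of $\Phi_p$ at node $j$ and, crucially, the set $Z_{jp}(\vec{\lambda})$ of optimal values of $y_{jp}$ is governed by subproblem $p$ alone. Reading off the minimizing value of $y_{jp}$ then yields the dictionary $\delta_p^j>0 \Leftrightarrow Z_{jp}(\vec{\lambda})=\{1\}$, $\delta_p^j<0 \Leftrightarrow Z_{jp}(\vec{\lambda})=\{0\}$, and $\delta_p^j=0 \Leftrightarrow Z_{jp}(\vec{\lambda})=\{0,1\}$.

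With this dictionary the two clauses of the weak agreement condition at node $j$ become sign conditions on the ordered quantities $\delta_{(1)}^j \ge \delta_{(2)}^j$. Clause~1 ($\exists p:\,1\in Z_{jp}(\vec{\lambda})$) is equivalent to $\delta_{(1)}^j \ge 0$, while clause~2 (at most one label can be forced on) is equivalent to $\delta_{(2)}^j \le 0$; hence weak agreement at $j$ holds iff $\delta_{(2)}^j \le 0 \le \delta_{(1)}^j$. Its negation is precisely $\delta_{(2)}^j>0$ or $\delta_{(1)}^j<0$, which, since $\delta_{(1)}^j \ge \delta_{(2)}^j$, means the two largest differences are strictly of the same sign --- exactly the predicate on which fig.~\ref{alg::minMarginals} performs an update.

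Finally I would close the argument at the returned point. A returned $\vec{\lambda}$ is one for which an entire sweep leaves $\vec{\lambda}$ unchanged, so all $\delta_p^j$ are evaluated at this single fixed $\vec{\lambda}$ and the update test fails at every node. By the equivalence above this gives $\delta_{(2)}^j \le 0 \le \delta_{(1)}^j$ for all $j$, i.e. weak agreement holds simultaneously at all nodes. I expect the only delicate point to be the boundary cases of the dictionary: one must verify that a tie $\delta_p^j=0$ really places both $0$ and $1$ in $Z_{jp}(\vec{\lambda})$, and that the equality cases of the ``same sign'' test align with the inclusive inequalities $\delta_{(2)}^j\le 0\le\delta_{(1)}^j$, so that ``no update'' and ``weak agreement'' coincide with no gap between them.
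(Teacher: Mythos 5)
Your proposal is correct and follows essentially the same route as the paper's proof: the paper likewise observes that at convergence the failed update test forces $\delta^j_p \leq 0 \leq \delta^j_{(1)}$ for all $p \neq p^{(1)}_j$ and all $j$, and then reads off $1 \in Z_{j,p^{(1)}_j}$ and $0 \in Z_{j,p}$ from the min-marginal differences to conclude weak agreement. Your version merely spells out the sign-to-$Z_{jp}$ dictionary and the equivalence of ``no update'' with weak agreement in more detail (the dictionary in fact follows directly from the definition of min-marginals and does not require the label decoupling you invoke).
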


\subsection{Tightness of the lower bounds \label{sec::tightness}}

\subsubsection{Pairwise associative MRFs}
\citet{Osokin11cvpr} show that in the case of pairwise associative MRFs maximizing Lagrangian dual~\eqref{eq::smd::dualFunc} is equivalent to the standard LP relaxation of pairwise energy~\eqref{eq::notation::energyMultilabelPairwise}, thus the SMR lower bound equals the ones of competitive methods such as~\cite{Komodakis10ddtrw}, \cite{Kappes12bundle}, \cite{Savchynskyy11nesterov} and in general is tighter than the lower bounds of TRW-S~\cite{Kolmogorov06trws} and MPLP~\cite{Globerson07mplp} (TRW-S and MPLP can get stuck at bad coordinate-wise maximums).
In addition \citet{Osokin11cvpr} show that in the case of pairwise associative MRFs the standard LP relaxation is equivalent to Kleinberg-Tardos~(KT) relaxation of a uniform metric labeling problem~\cite{Kleinberg02}.
In this section we reformulate the most relevant result for the sake of completeness. The proof will be given later as a corollary of our more general result for high-order MRFs (theorems \ref{theor::smrTightness} and \ref{theor::smrTightnessPermutedPotts}).

\begin{theorem}\label{theor::smdLB1}
In case of pairwise MRFs with associative pairwise potentials~\eqref{eq::notation::energyMultilabelPairwise}, \eqref{eq::smd::Potts} the maximum of Lagrangian dual~\eqref{eq::smd::dualFunc} equals the minimum value of the standard LP relaxation~\eqref{eq::notation::energyLinear}, \eqref{eq::notation::localGConsistensy}, \eqref{eq::notation::localGMarginalization} of the discrete energy minimization problem.
\end{theorem}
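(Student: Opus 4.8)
The plan is to reduce both sides of the claimed equality to one and the same convex program over the consistency simplex and then match them term by term. First I would exploit the label decomposition noted just above \eqref{eq::smd::Potts}: for associative potentials the inner minimization $\min_{\vec{y}}L(\vec{y},\vec{\lambda})$ splits into $|\mathcal{P}|$ independent submodular problems $\min_{\vec{y}_p\in\{0,1\}^{|\mathcal{V}|}}\Phi_p(\vec{y}_p,\vec{\lambda})$, each a quadratic pseudo-Boolean function whose only quadratic coefficients are $-C_{ij,p}\le 0$. Since such a function is submodular, Lov\'asz's theorem (equivalently, integrality of the min-cut LP) tells us that its minimum over the Boolean cube equals the minimum over $[0,1]^{|\mathcal{V}|}$ of its convex closure, obtained by replacing each product $y_{ip}y_{jp}$ with $\min(y_{ip},y_{jp})$. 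Hence $D(\vec{\lambda})$ from \eqref{eq::smd::dualFunc} equals $\sum_p\min_{\vec{y}_p\in[0,1]^{|\mathcal{V}|}}\big(\sum_i(\theta_{ip}+\lambda_i)y_{ip}-\sum_{\{i,j\}}C_{ij,p}\min(y_{ip},y_{jp})\big)-\sum_i\lambda_i$, a concave function of $\vec{\lambda}$ whose inner part is jointly convex in $\vec{y}$.

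Second I would perform a minimax exchange. Collecting the $\lambda_i$ terms, the objective under $\max_{\vec{\lambda}}$ is $\sum_p g_p^0(\vec{y}_p)+\sum_i\lambda_i(\sum_p y_{ip}-1)$ with $g_p^0(\vec{y}_p)=\sum_i\theta_{ip}y_{ip}-\sum_{\{i,j\}}C_{ij,p}\min(y_{ip},y_{jp})$ convex in $\vec{y}$ and the whole expression affine in $\vec{\lambda}$. Because the feasible set $\prod_p[0,1]^{|\mathcal{V}|}$ is compact and convex, Sion's minimax theorem lets me swap $\max_{\vec{\lambda}}\min_{\vec{y}}$ into $\min_{\vec{y}}\max_{\vec{\lambda}}$ with no duality gap; the inner $\max_{\vec{\lambda}}$ is finite only when $\sum_p y_{ip}=1$ for every $i$, forcing exactly the consistency constraints \eqref{eq::notation::localGConsistensy}. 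Thus $\max_{\vec{\lambda}}D(\vec{\lambda})$ equals the minimum of $\sum_{ip}\theta_{ip}y_{ip}-\sum_{\{i,j\}}\sum_p C_{ij,p}\min(y_{ip},y_{jp})$ over $\vec{y}$ satisfying $\sum_p y_{ip}=1$ and $y_{ip}\ge 0$.

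Third I would show that the standard LP relaxation \eqref{eq::notation::energyLinear}, \eqref{eq::notation::localGConsistensy}, \eqref{eq::notation::localGMarginalization} collapses to the identical program. Substituting the associative form \eqref{eq::smd::Potts}, the pairwise part of \eqref{eq::notation::energyLinear} becomes $-\sum_{\{i,j\}}\sum_p C_{ij,p}y_{ij,pp}$, so at the optimum one wants each diagonal $y_{ij,pp}$ as large as the marginalization constraints \eqref{eq::notation::localGMarginalization} permit, namely $y_{ij,pp}\le\min(y_{ip},y_{jp})$. The crux, and the main obstacle, is to verify that this upper bound is attainable on every diagonal entry $p$ simultaneously, and for label-dependent $C_{ij,p}$: I would argue that setting $y_{ij,pp}=\min(y_{ip},y_{jp})$ leaves residual masses $(y_{ip}-y_{jp})^+$ on side $i$ and $(y_{jp}-y_{ip})^+$ on side $j$ whose supports are disjoint and whose totals agree, so any transportation of the residuals yields a genuinely off-diagonal completion obeying \eqref{eq::notation::localGMarginalization}. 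This reduces the standard LP to the minimum of $\sum_{ip}\theta_{ip}y_{ip}-\sum_{\{i,j\}}\sum_p C_{ij,p}\min(y_{ip},y_{jp})$ over the same simplex, matching the Lagrangian dual value. Since the excerpt positions this theorem as a corollary of the high-order results (theorems~\ref{theor::smrTightness} and~\ref{theor::smrTightnessPermutedPotts}), an alternative is to invoke those directly; the direct route above is essentially their specialization to the diagonal associative potential.
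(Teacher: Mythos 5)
Your proposal is correct, and it is essentially a direct, self-contained specialization of the route the paper actually takes: the paper derives Theorem~\ref{theor::smdLB1} as a corollary of Theorem~\ref{theor::smrTightness} (whose proof rests on the tightness of the LP relaxation of a submodular binary function plus the strong-duality Lemma~\ref{theor::lagrangianDual}) and of Theorem~\ref{theor::smrTightnessPermutedPotts} (which shows the marginalization constraints~\eqref{eq::LP4cons} are redundant for permuted $\mathcal{P}^n$-Potts potentials, of which associative potentials are a special case). Your three ingredients map onto theirs one-to-one: the Lov\'asz-extension step is their appeal to the integrality of the submodular LP relaxation, the Sion minimax swap is their Lemma~\ref{theor::lagrangianDual}, and your collapse of the local polytope onto the $\min(y_{ip},y_{jp})$ program plays the role of their redundancy argument for~\eqref{eq::LP4cons}. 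What your version buys is concreteness in exactly the place where the paper is tersest: the proof of Theorem~\ref{theor::smrTightnessPermutedPotts} asserts that the equality in~\eqref{eq::LP4cons} "can be substituted with inequality" without exhibiting the feasible completion, whereas your residual-mass transportation construction (set $y_{ij,pp}=\min(y_{ip},y_{jp})$, note the row residuals $(y_{ip}-y_{jp})^+$ and column residuals $(y_{jq}-y_{iq})^+$ have disjoint supports and equal totals, and fill the off-diagonal block with, say, the product coupling) makes that step fully explicit and is airtight. What you give up is generality: the paper's detour through sparse pattern-based potentials yields Theorems~\ref{theor::smrTightness} and~\ref{theor::smrTightnessPermutedPotts} as reusable results for high-order models, while your argument is tailored to the pairwise associative case. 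Both routes are valid proofs of the stated theorem.
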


\subsubsection{High-order MRFs}
In this section we state the general result, compare it to known LP relaxations of high-order terms and further elaborate on one important special case.

The following theorem explicitly defines the LP relaxation of energy~\eqref{eq::notation::energyMultilabel} that is solved by SMR.
\begin{theorem}
 \label{theor::smrTightness}
 In case of high-order MRF with sparse pattern-based potentials~\eqref{eq::patternBased} the maximum of the SMR dual function~$D(\vec{\lambda})$~\eqref{eq::smr::dualHighOrder} equals  the minimum value of the following linear program:
 \begin{align}
\label{eq::smrLpInitialTarget}
\min_{\vec{y}}\; &   \sum_{C\in\mathcal{C}}\sum_{\vec{d} \in \mathcal{D}_C}
\hat{\theta}_{C,\vec{d}}\; y_{C,\vec{d}}   \\ 
\label{eq::smrLpInitialIntegrality1}
\st \; &y_{ip},y_{C,\vec{d}} \in[0,1], \; \forall i\!\in\!\mathcal{V}, \: \forall p\!\in\!\mathcal{P}, \: \forall C\!\in\!\mathcal{C}, \: \forall \vec{d}\!\in\!\mathcal{D}_C,\\
\label{eq::smrLpInitialConstr1}
& y_{C,\vec{d}} \leq y_{\ell d_\ell}, \quad \forall C\in\mathcal{C}, \; \forall\vec{d}\in\mathcal{D}_C, \; \forall \ell \in C,\\
\label{eq::smrLpInitialConsistensy}
&\sum_{p \in \mathcal{P} } y_{ip}=1, \quad \forall i\in\mathcal{V}.
\end{align}
\end{theorem}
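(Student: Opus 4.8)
The plan is to turn the Lagrangian dual $\max_{\vec{\lambda}} D(\vec{\lambda})$ into a single linear program by exploiting the submodularity of the inner problem, and then to eliminate the auxiliary switching variables $\vec{z}$ so as to recover exactly~\eqref{eq::smrLpInitialTarget}--\eqref{eq::smrLpInitialConsistensy}. First I would linearize the bilinear products $y_{\ell d_\ell} z_{C,\vec{d}}$ in $E_I^*$~\eqref{eq::smr::EYZsparseHighOrder} by introducing variables $w_{C,\vec{d},\ell}$ with $w_{C,\vec{d},\ell}\le y_{\ell d_\ell}$ and $w_{C,\vec{d},\ell}\le z_{C,\vec{d}}$; since $E_I^*$ is pairwise submodular, its Boolean minimum coincides with the value of this continuous min-cut LP, so for every $\vec{\lambda}$ we may write $D(\vec{\lambda})=\min_{(\vec{y},\vec{z},\vec{w})\in\mathcal{F}}\phi(\vec{y},\vec{z},\vec{w};\vec{\lambda})$, where $\mathcal{F}$ is the compact polytope given by the box constraints together with the two inequalities above, and $\phi$ is affine in all variables and in $\vec{\lambda}$.

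Next I would read off $\max_{\vec{\lambda}} D(\vec{\lambda})$ as an instance of LP duality: the function $\phi$ collects the dualized consistency terms $\sum_{i}\lambda_i(\sum_{p} y_{ip}-1)$ linearly, so $\max_{\vec{\lambda}}\min_{(\vec{y},\vec{z},\vec{w})\in\mathcal{F}}\phi$ is precisely the Lagrangian dual of the linear program that minimizes the $\vec{\lambda}$-free part of $\phi$ over $\mathcal{F}$ subject to the equalities $\sum_{p} y_{ip}=1$. This primal LP is feasible and bounded, as its feasible set is a nonempty compact polytope, so strong LP duality yields $\max_{\vec{\lambda}} D(\vec{\lambda})=\min\bigl(-\sum_{C,\vec{d}}\hat{\theta}_{C,\vec{d}}((|C|-1)z_{C,\vec{d}}-\sum_{\ell} w_{C,\vec{d},\ell})\bigr)$ taken over $(\vec{y},\vec{z},\vec{w})\in\mathcal{F}$ with $\vec{y}\in\eqref{eq::smd::consistencyConstraints}$.

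Finally I would eliminate $\vec{z}$ and $\vec{w}$. For fixed $\vec{y}$ the terms decouple across $(C,\vec{d})$; writing $A=-\hat{\theta}_{C,\vec{d}}\ge 0$, the optimal choice is $w_{C,\vec{d},\ell}=\min(z_{C,\vec{d}},y_{\ell d_\ell})$ because its objective coefficient is nonpositive, which leaves $\min_{z\in[0,1]} A\bigl((|C|-1)z-\sum_{\ell}\min(z,y_{\ell d_\ell})\bigr)$. The bracketed expression is convex and piecewise linear with minimum $-\min_{\ell\in C} y_{\ell d_\ell}$, attained at $z=\min_{\ell\in C} y_{\ell d_\ell}$, so the contribution equals $\hat{\theta}_{C,\vec{d}}\min_{\ell\in C} y_{\ell d_\ell}$. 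Since $\hat{\theta}_{C,\vec{d}}\le 0$, introducing $y_{C,\vec{d}}$ with the constraints $y_{C,\vec{d}}\le y_{\ell d_\ell}$~\eqref{eq::smrLpInitialConstr1} and objective $\hat{\theta}_{C,\vec{d}} y_{C,\vec{d}}$ drives $y_{C,\vec{d}}$ up to $\min_{\ell} y_{\ell d_\ell}$ at the optimum, reproducing exactly program~\eqref{eq::smrLpInitialTarget}--\eqref{eq::smrLpInitialConsistensy}. I expect the main obstacle to be the first step: rigorously justifying that the continuous min-cut LP is tight for the submodular Lagrangian, so that $D(\vec{\lambda})$ is genuinely an LP value and the ensuing interchange of $\max_{\vec{\lambda}}$ and $\min$ through strong duality is legitimate; the elimination of $\vec{z}$ is then a short convexity computation.
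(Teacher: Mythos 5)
Your proposal is correct and follows essentially the same route as the paper: the paper likewise passes through the standard LP relaxation of the pairwise submodular function $E_I^*(\vec{y},\vec{z})$ (your variables $w_{C,\vec{d},\ell}$ are its pairwise variables $z^{\ell}_{C,\vec{d}}$), invokes tightness of that relaxation for submodular energies to identify $D(\vec{\lambda})$ with an LP value, applies strong duality to the dualized consistency constraints, and eliminates the auxiliary variables via exactly your convexity computation (the paper's Lemma~1, which also notes the minimizer is any $z$ in $[y_{(1)},y_{(2)}]$). The one step you rightly flag as the main obstacle --- tightness of the continuous relaxation for submodular pairwise pseudo-Boolean functions --- is the step the paper settles by citation rather than by a new argument.
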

\begin{proof}
We start the proof with the two lemmas.
\begin{lemma}
\label{theor::lpHighOrderLemma}
Let~$y_1, \dots, y_K$ be real numbers belonging to the segment~$[0, 1]$, $K > 1$.
Consider the following linear program:
\begin{align}
\label{eq::lemmaHighOrderLp:target}
\min_{z, z_1, \dots, z_K \in [0, 1]} \quad&  z(K - 1) - \sum_{k = 1}^K z_k \\
\label{eq::lemmaHighOrderLp:constraint}
\st \qquad\;\; &
z_k \leq z, \; z_k \leq y_k.
\end{align}
Point~$z = z_1 = \dots = z_K = \min_{k = 1, \dots, K} y_k$ delivers the minimum of LP~\eqref{eq::lemmaHighOrderLp:target}-\eqref{eq::lemmaHighOrderLp:constraint}.
\end{lemma}
Lemma~\ref{theor::lpHighOrderLemma} is proven in the supplementary material.
\begin{lemma}
\label{theor::lagrangianDual}
Consider the convex optimization problem
\begin{align}
\label{eq::lemmaKKT:target}
\min_{\vec{y} \in \mathcal{Y}} \quad & f(\vec{y}), \\
\label{eq::lemmaKKT:constr}
\st \quad
& A\vec{y} = \vec{b},
\end{align}
where $\mathcal{Y} \subseteq \mathbb{R}^n$ is a convex set, $A$ is a real-valued matrix of size~$n \times m$, $\vec{b} \in \mathbb{R}^m$, $f: \mathbb{R}^n \to \mathbb{R}$ is a convex function. Let set~$\mathcal{Y}$ contain a non-empty interior and the solution of problem~\eqref{eq::lemmaKKT:target}-\eqref{eq::lemmaKKT:constr} be finite.

Then the value of the solution of~\eqref{eq::lemmaKKT:target}-\eqref{eq::lemmaKKT:constr} equals
$
\max\limits_{\vec{\lambda} \in  \mathbb{R}^m }
\min\limits_{\vec{y} \in \mathcal{Y}}
L(\vec{y}, \vec{\lambda})
$
where~$
L(\vec{y}, \vec{\lambda}) =
f(\vec{y}) + \vec{\lambda}^T (A \vec{y}-\vec{b}).
$
\end{lemma}
Lemma~\ref{theor::lagrangianDual} is a direct corollary of the strong duality theorem with linearity constraint qualification~\cite[Th.~6.4.2]{bertsekas2003book}.

Let us finish the proof of theorem~\ref{theor::smrTightness}.
All values of the potentials are non-positive~$\hat{\theta}_{C,\vec{d}} \leq 0$ so lemma~\ref{theor::lpHighOrderLemma} implies that the LP~\eqref{eq::smrLpInitialTarget}-\eqref{eq::smrLpInitialConsistensy} is equivalent to the following LP:\\[-0.3cm]
\begin{align}
\label{eq::smrLpTarget}
\min_{\vec{y}, \vec{z}}\; &-\!\sum_{C\in\mathcal{C}}\sum_{\vec{d} \in \mathcal{D}_C}
\!\hat{\theta}_{C,\vec{d}}\biggl((|C|-1) z_{C,\vec{d}} - \!\sum_{\ell \in C }\! z_{C,\vec{d}}^{\ell}\biggr)\\
\label{eq::smrLpIntegrality}
\st\; &z_{C,\vec{d}}^\ell \in [0, 1], \quad\forall C\!\in\!\mathcal{C}, \; \forall\vec{d}\!\in\!\mathcal{D}_C, \; \forall \ell\!\in\!C, \\
 \notag & y_{ip},z_{C,\vec{d}} \in[0,1], \; \forall i\!\in\!\mathcal{V}, \: \forall p\!\in\!\mathcal{P}, \: \forall C\!\in\!\mathcal{C}, \: \forall \vec{d}\!\in\!\mathcal{D}_C,\\
\label{eq::smrLpConstr} &z_{C,\vec{d}}^{\ell} \leq z_{C,\vec{d}}, \: z_{C,\vec{d}}^{\ell} \le y_{\ell d_\ell},
  C\!\in\!\mathcal{C}, \: \forall\vec{d}\!\in\!\mathcal{D}_C, \: \forall \ell\!\in\!C,\\[-0.1cm]
\label{eq::smrLpConsistensy}
&\sum_{p \in \mathcal{P} } y_{ip}=1, \quad\quad\: \forall i\!\in\!\mathcal{V}.
\end{align}
Denote the target function~\eqref{eq::smrLpTarget} by $Q(\vec{y}, \vec{z})$. Let\\[-0.3cm]
\begin{equation*}
R(\vec{\lambda})=\min_{\vec{y}, \vec{z} \in \eqref{eq::smrLpIntegrality}-\eqref{eq::smrLpConstr}} \biggl(Q(\vec{y}, \vec{z}) +
\sum_{i\in\mathcal{V}}\lambda_i\biggl(\sum_{p \in \mathcal{P}} y_{ip}-1\biggr)\biggr).
\end{equation*}
Task~\eqref{eq::smrLpTarget}, \eqref{eq::smrLpIntegrality}-\eqref{eq::smrLpConstr} is equivalent to the standard LP-relaxation of function~\eqref{eq::smr::EYZsparseHighOrder} of binary variables~$\vec{y}$ and $\vec{z}$ (see lemma~3 in the supplementary material).  As~$\hat{\theta}_{C,\vec{d}} \leq 0$, function~\eqref{eq::smr::EYZsparseHighOrder} is submodular and thus its LP-relaxation is tight~\cite[Th.~3]{Kolmogorov05trw}, which means that~$R(\vec{\lambda})=D(\vec{\lambda})$ for an arbitrary~$\vec{\lambda}$.
Function~$f(\vec{y}) = \min_{\vec{z} \in \eqref{eq::smrLpIntegrality}-\eqref{eq::smrLpConstr}} Q(\vec{y}, \vec{z})$,  set~$\mathcal{Y}$ defined by~$y_{ip}\in[0,1]$, matrix~$A$ and vector~$\vec{b}$ defined by~\eqref{eq::smrLpConsistensy} satisfy the conditions of lemma~\ref{theor::lagrangianDual} which implies that the value of solution of LP~\eqref{eq::smrLpTarget}-\eqref{eq::smrLpConsistensy} equals~$R(\vec{\lambda}^*)$, where~$\vec{\lambda}^* =\argmax_{\vec{\lambda}} R(\vec{\lambda})$. The proof is finished by equality~$R(\vec{\lambda}^*) = D(\vec{\lambda}^*)$.
\end{proof}

\begin{corollary}
\label{theor::smrTightnessPairwise}
For the case of pairwise energy~\eqref{eq::notation::energyMultilabelPairwise} with $\theta_{ij,pq} \leq 0$ the SMR returns the lower bound equal to the solution of the following LP-relaxation:\\[-0.3cm]
\begin{align}
\label{eq::LP1:pairwise}
\min_{\vec{y}}\; &\sum_{i \in \mathcal{V}} \sum_{p \in \mathcal{P}} \theta_{ip} y_{ip} + \sum_{ \{i, j\} \in \mathcal{E}} \sum_{p,q \in \mathcal{P}} \theta_{ij, pq} y_{ij, pq}\\
\notag
\st\; &y_{ip}\in[0,1],  \quad \forall i\in\mathcal{V}, \; \forall p\in\mathcal{P},\\
\notag
& y_{ij, pq} \in[0,1],  \quad \forall \{i,j\} \in \mathcal{E}, \; \forall p,q \in \mathcal{P}, \\
\label{eq::LP2cons:pairwise} & y_{ij, pq} \leq y_{i p}, \: y_{ij, pq} \leq y_{j q},
 \; \forall \{i,j\}\!\in\!\mathcal{E},  \forall p,q\!\in\!\mathcal{P},\\
\label{eq::LP3cons:pairwise}
&\sum\nolimits_{p \in \mathcal{P} } y_{ip}=1, \quad \forall i\in\mathcal{V}.
\end{align}
\end{corollary}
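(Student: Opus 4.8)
The plan is to obtain the corollary as a direct specialization of Theorem~\ref{theor::smrTightness} to hyperedges of order at most two. First I would cast the pairwise energy~\eqref{eq::notation::energyMultilabelPairwise} in the pattern-based form~\eqref{eq::patternBased}: each unary potential $\theta_i$ becomes an order-one pattern-based potential on $C=\{i\}$ with $\mathcal{D}_{\{i\}}=\mathcal{P}$ and $\hat{\theta}_{\{i\},(p)}=\theta_{ip}$, while each pairwise potential $\theta_{ij}$ becomes an order-two pattern-based potential on $C=\{i,j\}$ with $\mathcal{D}_{\{i,j\}}=\mathcal{P}\times\mathcal{P}$ and $\hat{\theta}_{\{i,j\},(p,q)}=\theta_{ij,pq}$. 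Since $\theta_{ij,pq}\le 0$ by assumption, every order-two pattern value is non-positive, so Theorem~\ref{theor::smrTightness} applies and the SMR lower bound equals the minimum of the LP~\eqref{eq::smrLpInitialTarget}-\eqref{eq::smrLpInitialConsistensy}. Note that taking the full set $\mathcal{D}_{\{i,j\}}=\mathcal{P}\times\mathcal{P}$ is harmless here, because sparsity is only needed for computational efficiency and plays no role in the LP equality proved by the theorem.

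Next I would rewrite that LP under the identification of pattern variables with pairwise marginals, $y_{\{i\},(p)}\equiv y_{ip}$ and $y_{\{i,j\},(p,q)}\equiv y_{ij,pq}$. Under this substitution the objective~\eqref{eq::smrLpInitialTarget} becomes $\sum_{i}\sum_{p}\theta_{ip}y_{ip}+\sum_{\{i,j\}}\sum_{p,q}\theta_{ij,pq}y_{ij,pq}$, i.e.~\eqref{eq::LP1:pairwise}, and the consistency constraint~\eqref{eq::smrLpInitialConsistensy} is exactly~\eqref{eq::LP3cons:pairwise}. The substantive step is the projection constraint~\eqref{eq::smrLpInitialConstr1}, $y_{C,\vec{d}}\le y_{\ell d_\ell}$ for every $\ell\in C$: for the order-one hyperedges it reduces to the vacuous inequality $y_{ip}\le y_{ip}$, while for each order-two hyperedge $\{i,j\}$ and pattern $(p,q)$ it yields the two inequalities $y_{ij,pq}\le y_{ip}$ and $y_{ij,pq}\le y_{jq}$, which are precisely~\eqref{eq::LP2cons:pairwise}. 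Thus the LP of Theorem~\ref{theor::smrTightness} coincides term by term with~\eqref{eq::LP1:pairwise}-\eqref{eq::LP3cons:pairwise}.

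The one point that needs care is that Theorem~\ref{theor::smrTightness} nominally requires \emph{every} pattern value, including the unary ones, to be non-positive, whereas the corollary places no sign restriction on the unary terms $\theta_{ip}$. I expect this to be the main, though minor, obstacle. I would resolve it exactly as in the paragraph on positive values of $\hat{\theta}_C$: since the consistency constraint forces $\sum_{p}y_{ip}=1$, subtracting $M_i=\max_{p}\theta_{ip}$ from every $\theta_{ip}$ leaves the discrete minimizer unchanged and shifts the SMR dual, the energy, and the LP objective~\eqref{eq::LP1:pairwise} all by the same additive constant $\sum_i M_i$. Hence I may assume $\theta_{ip}\le 0$ without loss of generality, invoke the theorem, and then undo the shift; since both sides move by the identical constant, the claimed equality between the SMR bound and the minimum of~\eqref{eq::LP1:pairwise}-\eqref{eq::LP3cons:pairwise} is preserved, which completes the argument.
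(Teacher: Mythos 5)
Your proposal is correct and follows exactly the route the paper intends: the corollary is stated as an immediate specialization of Theorem~\ref{theor::smrTightness} to hyperedges of order at most two, with the pattern variables $y_{C,\vec{d}}$ identified with the pairwise marginals $y_{ij,pq}$ and constraint~\eqref{eq::smrLpInitialConstr1} becoming~\eqref{eq::LP2cons:pairwise}. Your explicit treatment of the possibly positive unary values via the subtraction of $M_i=\max_p\theta_{ip}$ (which shifts the dual maximum and the LP minimum by the same constant because of~\eqref{eq::LP3cons:pairwise}) is a point the paper leaves implicit, and it is handled correctly.
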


\paragraph{Permuted $\mathcal{P}^n$-Potts.}
\citet{Komodakis09highorder} formulate a generally tighter relaxation than~\eqref{eq::smrLpInitialTarget}-\eqref{eq::smrLpInitialConsistensy} by adding constraints responsible for marginalization of high-order potentials w.r.t all but one variable:
\begin{equation}
\label{eq::LP4cons}
\sum_{\vec{p}\in\mathcal{P}^{C}: \;p_\ell=p_0} \!\!\!\!\!y_{C,\vec{p}}  =  y_{\ell p_0},\; \forall C\in\mathcal{C}, \: \forall p_0\in\mathcal{P}, \: \forall \ell \in C.
\end{equation}
In this paragraph we show that in certain cases the relaxation of~\cite{Komodakis09highorder} is equivalent
to~\eqref{eq::smrLpInitialTarget}-\eqref{eq::smrLpInitialConsistensy}, i.e. constraints~\eqref{eq::LP4cons} are redundant.
\begin{definition} $\!$Potential $\theta_C$ is called permuted $\mathcal{P}^n$-Potts if\\[-0.3cm]
$$
\forall C\in\mathcal{C} \;\; \forall \vec{d}',\vec{d}''\in\mathcal{D}_C \;\; \forall i\in C  : \quad \vec{d}' \neq \vec{d}''\Rightarrow d'_i\ne d''_i.
$$
\end{definition}

In permuted $\mathcal{P}^n$-Potts potentials all preferable configurations differ from each other in all variables. The $\mathcal{P}^n$-Potts potential described by~\citet{Kohli08potts} is a special case.
\begin{theorem}
\label{theor::smrTightnessPermutedPotts}
If all higher-order potentials are permuted $\mathcal{P}^n$-Potts, then the maximum of dual function~\eqref{eq::smr::dualHighOrder} is equal to the value of the solution of LP~\eqref{eq::smrLpInitialTarget}-\eqref{eq::smrLpInitialConsistensy}, \eqref{eq::LP4cons}.
\end{theorem}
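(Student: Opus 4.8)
The plan is to reduce the statement to a comparison of two linear programs and then invoke Theorem~\ref{theor::smrTightness}. That theorem already equates the maximum of the SMR dual~\eqref{eq::smr::dualHighOrder} with the optimal value of the LP~\eqref{eq::smrLpInitialTarget}--\eqref{eq::smrLpInitialConsistensy}, so it suffices to show that adjoining the marginalization constraints~\eqref{eq::LP4cons} (with the natural nonnegativity $y_{C,\vec p}\ge 0$ of the introduced variables) leaves the optimal value unchanged, i.e.\ that the two LPs have equal value. One inequality is immediate: restricting any feasible point of the augmented LP to the variables $\{y_{ip}\}$ and $\{y_{C,\vec d}\}_{\vec d\in\mathcal D_C}$ yields a point feasible for~\eqref{eq::smrLpInitialTarget}--\eqref{eq::smrLpInitialConsistensy}, because summing~\eqref{eq::LP4cons} over $p_0$ recovers the consistency constraint~\eqref{eq::smrLpInitialConsistensy}, while isolating the single term $\vec p=\vec d$ in~\eqref{eq::LP4cons} and using $y_{C,\vec p}\ge 0$ forces $y_{C,\vec d}\le y_{\ell d_\ell}$, which is exactly~\eqref{eq::smrLpInitialConstr1}. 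Since the two objectives coincide, the augmented LP's value is at least that of~\eqref{eq::smrLpInitialTarget}--\eqref{eq::smrLpInitialConsistensy}.

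For the reverse inequality I would take an optimal solution $\{y_{ip}^*,y_{C,\vec d}^*\}$ of~\eqref{eq::smrLpInitialTarget}--\eqref{eq::smrLpInitialConsistensy} and extend it to a feasible point of the augmented LP with the same objective. As every $\hat\theta_{C,\vec d}\le 0$, the optimum may be taken with $y_{C,\vec d}^*=t_{\vec d}:=\min_{\ell\in C}y_{\ell d_\ell}^*$, the largest value permitted by~\eqref{eq::smrLpInitialConstr1}. For each hyperedge $C$ I then construct the missing variables $\{y_{C,\vec p}^*\}_{\vec p\in\mathcal P^C}$ as a measure on $\mathcal P^C$ that places mass $t_{\vec d}$ on each preferred configuration $\vec d\in\mathcal D_C$ and spreads the remaining mass $s:=1-\sum_{\vec d}t_{\vec d}$ as the scaled product of the residual single-node marginals $r_{\ell p}:=y_{\ell p}^*-\sum_{\vec d:\,d_\ell=p}t_{\vec d}$. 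The permuted property makes the inner sum contain at most one term, so $r_{\ell p}\ge 0$ and $\sum_p r_{\ell p}=s\ge 0$; a direct check then shows the single-node marginals of this measure equal $y_{\ell p}^*$, so~\eqref{eq::LP4cons} holds, and the objective is unchanged because it depends only on the $y_{C,\vec d}$.

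The step I expect to be the crux is guaranteeing that this extension keeps each $y_{C,\vec d}^*$ exactly at $t_{\vec d}$: because $\hat\theta\le 0$, any extra mass the product part deposited on a preferred configuration would strictly improve the augmented objective and break the match with the SMR optimum, so I must ensure the residual part avoids $\mathcal D_C$ entirely. This is precisely where the permuted $\mathcal P^n$-Potts hypothesis is essential. For each $\vec d$ choose a coordinate $\ell^\star$ attaining the minimum, so $y_{\ell^\star d_{\ell^\star}}^*=t_{\vec d}$; by the permuted condition no other preferred configuration carries the label $d_{\ell^\star}$ at coordinate $\ell^\star$, whence $r_{\ell^\star d_{\ell^\star}}=y_{\ell^\star d_{\ell^\star}}^*-t_{\vec d}=0$. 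A vanishing residual marginal forces every measure with marginals $r$ to assign zero mass to all configurations labelled $d_{\ell^\star}$ at $\ell^\star$, in particular to $\vec d$ itself, so the product part automatically avoids $\mathcal D_C$ and the total mass on each $\vec d$ stays $t_{\vec d}$. Thus the extension is feasible for the augmented LP and attains the SMR optimum, giving the reverse inequality and hence the equality of the two LP values; Theorem~\ref{theor::smrTightness} then delivers the claimed equality with the maximum of the dual~\eqref{eq::smr::dualHighOrder}.
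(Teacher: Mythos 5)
Your proposal is correct and follows the same overall route as the paper: both reduce the claim to showing that adjoining the marginalization constraints~\eqref{eq::LP4cons} to the LP of theorem~\ref{theor::smrTightness} leaves its optimal value unchanged, using non-positivity of the $\hat{\theta}_{C,\vec{d}}$ and the permuted condition. Where you differ is in how the nontrivial direction is executed. The paper argues in two lines that, since the variables $y_{C,\vec{p}}$ want to take maximal values and each sum in~\eqref{eq::LP4cons} contains at most one preferred configuration, \eqref{eq::LP4cons} ``reduces to''~\eqref{eq::smrLpInitialConstr1}; this glosses over the fact that the sums in~\eqref{eq::LP4cons} also range over non-preferred configurations, whose auxiliary variables must be assigned values satisfying all $|C|$ marginalization constraints simultaneously --- i.e.\ one must exhibit a joint distribution on $\mathcal{P}^C$ with the prescribed node marginals that places exactly $\min_{\ell} y^*_{\ell d_\ell}$ on each $\vec{d}\in\mathcal{D}_C$. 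Your explicit extension (point masses $t_{\vec{d}}$ on $\mathcal{D}_C$ plus a product measure on the residual marginals, with the permuted condition invoked twice: once to keep the residuals nonnegative and once to force the residual measure to vanish on $\mathcal{D}_C$ so that~\eqref{eq::smrLpInitialConstr1} is not violated) supplies precisely the witness the paper leaves implicit. So the two arguments reach the same conclusion by the same strategy, but yours is the more complete one; the paper buys brevity at the cost of not verifying that the consistent extension exists.
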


\begin{proof}
First, all values~$\hat{\theta}_{C,\vec{d}}$ are non-positive so variables~$y_{C,\vec{p}}$ take maximal possible values. This implies that equality in~\eqref{eq::LP4cons} can be substituted with inequality.
Second, for a permuted $\mathcal{P}^n$-Potts potential $\theta_C$ for each $\ell \in C$ and for each $p_0\in\mathcal{P}$ there exists no more than one labeling $\vec{d} \in \mathcal{D}_C$ such that $d_\ell = p_0$ which means that all the sums in~\eqref{eq::LP4cons} can be substituted by the single summands. These two observations prove that~\eqref{eq::LP4cons} is equivalent to~\eqref{eq::smrLpInitialConstr1}.
\end{proof}

The LP result for the associative pairwise MRFs (theorem~\ref{theor::smdLB1}) immediately follows from theorem~\ref{theor::smrTightnessPermutedPotts} and corollary~\ref{theor::smrTightnessPairwise} because all associative pairwise potentials~\eqref{eq::smd::Potts} are permuted $\mathcal{P}^n$-Potts and condition \eqref{eq::LP4cons} reduces to marginalization constraints~\eqref{eq::notation::localGMarginalization}.

\subsubsection{Global linear constraints}

\begin{theorem}
\label{theor::globalConstrLB}
The maximum of the dual function~$D(\vec{\lambda}, \vec{\xi}, \vec{\pi})$~\eqref{eq::globalConstr::dualfunc} (constructed for the minimization of pairwise energy~\eqref{eq::notation::energyMultilabelPairwise} under global linear constraints~\eqref{eq::globalConstr::equality} and~\eqref{eq::globalConstr::inequality}) under constraints $\pi_k \geq 0$ is equal to the value of the solution of LP relaxation~\eqref{eq::LP1:pairwise}-\eqref{eq::LP3cons:pairwise}  with addition of the global linear constraints~\eqref{eq::globalConstr::equality} and~\eqref{eq::globalConstr::inequality}.
\end{theorem}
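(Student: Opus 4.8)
The plan is to follow the template of the proof of Theorem~\ref{theor::smrTightness}: first I would establish a pointwise identity between the SMR dual and a continuous inner minimization over a relaxed polytope, and then I would invoke linear-programming strong duality to pull the relaxed constraints out of the Lagrangian.

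First I would note that, viewed as a function of the binary variables $\vec{y}$, the Lagrangian $L(\vec{y}, \vec{\lambda}, \vec{\xi}, \vec{\pi})$ differs from $E_I(\vec{y})$ only by unary terms and an additive constant: the multipliers enter exclusively through the modified unary coefficient $\theta_{ip} + \lambda_i + \sum_m \xi_m w_{ip}^m + \sum_k \pi_k v_{ip}^k$ of each $y_{ip}$. Since the pairwise part is untouched and satisfies $\theta_{ij,pq} \leq 0$, the Lagrangian stays submodular for \emph{every} value of the multipliers (the sign of $\pi_k$ is irrelevant here, as these terms are unary). Consequently, exactly as in the proof of Corollary~\ref{theor::smrTightnessPairwise}, tightness of the standard LP relaxation of a submodular pseudo-Boolean function~\cite[Th.~3]{Kolmogorov05trw} yields the pointwise identity
\begin{equation*}
D(\vec{\lambda}, \vec{\xi}, \vec{\pi}) = R(\vec{\lambda}, \vec{\xi}, \vec{\pi}) := \min_{\vec{y} \in \mathcal{Y}} L_L(\vec{y}, \vec{\lambda}, \vec{\xi}, \vec{\pi}),
\end{equation*}
where $L_L$ denotes the Lagrangian with the quadratic $E_I$ replaced by the linearized energy $E_L$~\eqref{eq::notation::energyLinear} in the pairwise variables $y_{ij,pq}$, and $\mathcal{Y}$ is the polytope cut out by the box constraints together with~\eqref{eq::LP2cons:pairwise}.

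Second, I would assemble the combined linear program---the objective~\eqref{eq::LP1:pairwise} over $\mathcal{Y}$ subject to the consistency constraints~\eqref{eq::LP3cons:pairwise}, the global equalities~\eqref{eq::globalConstr::equality}, and the global inequalities~\eqref{eq::globalConstr::inequality}---and apply strong duality. Taking $\mathcal{Y}$ as the convex feasible set (it has non-empty interior, e.g. the point $y_{ip} = 1/2$, $y_{ij,pq} = 1/4$ is strictly feasible), keeping consistency and the global equalities as an equality block $A\vec{y} = \vec{b}$ and the global inequalities as an inequality block, LP strong duality---a routine extension of Lemma~\ref{theor::lagrangianDual} in which the inequality constraints receive a multiplier $\vec{\pi}$ constrained to $\vec{\pi} \geq 0$---shows that the optimal value of this LP equals $\max_{\vec{\lambda}, \vec{\xi},\, \vec{\pi} \geq 0} R(\vec{\lambda}, \vec{\xi}, \vec{\pi})$, provided the LP is feasible (boundedness is automatic from the box constraints). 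Combining with the first step gives $\max R = \max D$, so the LP value equals $\max_{\vec{\lambda}, \vec{\xi},\, \vec{\pi} \geq 0} D(\vec{\lambda}, \vec{\xi}, \vec{\pi})$, which is the claim.

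The step I expect to be the main obstacle is the passage from Lemma~\ref{theor::lagrangianDual}, stated only for equality constraints, to the present mixed equality/inequality setting, together with the bookkeeping of the sign constraint $\vec{\pi} \geq 0$. Because the augmented problem is a finite-dimensional linear program, exact strong duality is delivered directly by LP duality whenever the primal is feasible, so this extension is routine; the only genuine hypothesis one must carry along is primal feasibility of the global constraints, since an infeasible primal forces both the LP value and $\max D$ to $+\infty$.
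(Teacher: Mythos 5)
Your proposal is correct and matches the route the paper intends: the paper's own proof is only the one-line remark that the argument of Theorem~\ref{theor::smrTightness} carries over once the extra multiplier terms are included in the Lagrangian, and your write-up is a faithful instantiation of exactly that---the global-constraint terms are unary in $\vec{y}$, so submodularity and hence tightness of the inner LP relaxation~\cite[Th.~3]{Kolmogorov05trw} give $D = R$ pointwise, after which strong LP duality (with the routine extension of Lemma~\ref{theor::lagrangianDual} to an inequality block with $\vec{\pi} \geq 0$) finishes the argument. Your explicit flagging of primal feasibility of the global constraints is a detail the paper leaves implicit, but it does not change the approach.
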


The proof is similar to the proof of theorem~\ref{theor::smrTightness} but requires including extra terms in the Lagrangian. Theorem~\ref{theor::globalConstrLB} generalizes to the case of high-order potentials.

\section{Experimental evaluation \label{sec::experiments}}

\subsection{Pairwise associative MRFs \label{sec::exp::optimization}}

In this section we evaluate the performance of different optimization methods discussed in sec.~\ref{sec::optimization} and compare the SMR approach against the dual decomposition on trees used in~\cite{Komodakis10ddtrw}, \cite{Kappes12bundle} (DD~TRW) and TRW-S method~\cite{Kolmogorov06trws}.

\paragraph{Dataset.} To perform our tests we use the MRF instances created by~\citet{Alahari10}.\!\footnote{\url{http://www.di.ens.fr/~alahari/data/pami10data.tgz}} We work with instances of two types: stereo (four MRFs) and semantic image segmentation (five MRFs). All of them are pairwise MRFs with 4-connectivity system and Potts pairwise potentials. The instances of these two groups are of significantly different sizes and thus we report the results separately.

\paragraph{Parameter choice.}
For each group and for each optimization method we use grid search to find the optimal method parameters.
As a criterion we use the maximum value of lower bound obtained within 25 seconds for segmentation and 200 seconds for stereo. For the bundle method we observe that larger bundles lead to better results, but increase the computation time. Value $b_s = 100$ is a compromise for both stereo and segmentation datasets. The same reasoning gives us $b_s = 10$ for LMBM and $h_r = 10$ for L-BFGS. Parameter $w_{\min}$ for (aggregated) bundle method did not affect the performance so we used $10^{-10}$ as in~\cite{Kappes12bundle}. The remaining parameters are presented in table~\ref{tbl::gridSearchParameter}.

\paragraph{Implementation details.} For L-BFGS and LMBM we use the original authors' implementations. All the other optimization routines are implemented in MATLAB. Both SMR and DD~TRW oracles are implemented in C++: for SMR we used the dynamic version of BK max-flow algorithm~\cite{Boykov04}, \cite{Kohli07dynamic}; for DD~TRW we used the domestic implementation of dynamic programming speeded up specifically for Potts model. To estimate the primal solution at the current point we use 5 iterations of ICM (Iterated Conditional Modes)~\cite{Besag86} everywhere. We normalize all the energies in such a way that the energy of the $\alpha$-expansion result corresponds to 100, the sum of minimums of unary potentials~-- to 0. All our codes are available online.\!\footnote{\url{https://github.com/aosokin/submodular-relaxation}}

\begin{table}
\renewcommand{\arraystretch}{1.3}
\caption{The optimal parameter values chosen by the grid search in sec.~\ref{sec::exp::optimization}.
\label{tbl::gridSearchParameter}}
\centering
\begin{tabular}{m{0.1cm}|m{1.35cm}||m{2.5cm}|m{2.5cm}}
\hline
& \bfseries Method & \bfseries Segmentation & \bfseries Stereo\\
\hline\hline
\multirow{4}{*}{\begin{sideways}DD~TRW\end{sideways} }
& subgradient
&  $\gamma = 0.3$
& $\gamma = 0.1$
\\ \cline{2-4}
& bundle
& $\gamma{=}0.01$, $m_L{=}0.05$, $w_{\max}{=}1000$
& $\gamma{=}0.01$, $m_L{=}0.05$, $w_{\max}{=}2000$
\\\cline{2-4}
& \parbox{1cm}{aggr. \\bundle}
& $\gamma{=}0.02$, $w_{\max}{=}100$,\!\!\! $m_r{=}0.0005$
& $\gamma{=}0.01$, $m_r{=}0.002$, $w_{\max}{=}500$
\\
\hline\hline
\multirow{4}{*}{\begin{sideways}SMR\end{sideways}}
& subgradient
& $\gamma = 0.7$
& $\gamma = 0.3$
\\ \cline{2-4}
& bundle
&  $\gamma{=}0.1$, $m_L{=}0.2$, $w_{\max}{=}100$
&  $\gamma{=}0.01$, $m_L{=}0.1$, $w_{\max}{=}100$
\\ \cline{2-4}
& \parbox{1cm}{aggr. \\bundle}
& $\gamma{=}0.02$, $m_r{=}0.001$, $w_{\max}{=}500$
& $\gamma{=}0.02$, $m_r{=}0.001$, $w_{\max}{=}1000$
\\
\hline
\end{tabular}
\end{table}

\paragraph{Results.} Fig.~\ref{fig::optimization} shows the plots of the averaged lower bounds vs. time for stereo and segmentation instances. See fig.~7 in the supplementary material for more detailed plots. We observe that SMR outperforms DD~TRW for almost all optimization methods and for the segmentation instances it outperforms TRW-S as well. When $\alpha$-expansion was not able to find the global minimum of the energy (1 segmentation instance and all stereo instances) all the tested methods obtained primal solutions with lower energies than the energies obtained by $\alpha$-expansion. 

\begin{figure}[t]
\centering
\includegraphics[width=8cm]{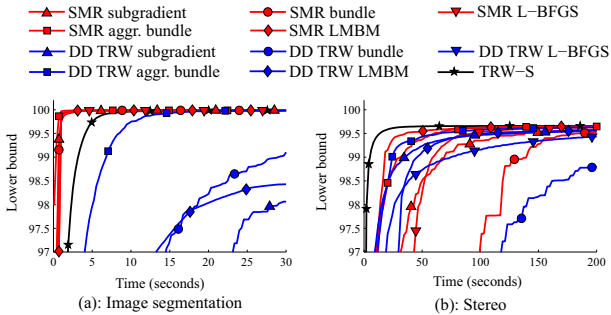}
\caption{The results of experiment~\ref{sec::exp::optimization} comparing SMR, DD~TRW (both with different optimization routines) and TRW-S on segmentation and stereo datasets.
\label{fig::optimization}}
\end{figure}

\subsection{High-order potentials}
In this section we evaluate the performance of the SMR approach on the high-order robust~$\mathcal{P}^n$-Potts model proposed by~\citet{kohli2009robust} for the task of semantic image segmentation.

\paragraph{Dataset.}
We use the two models both constructed for the 21-class MSRC dataset~\cite{Shotton06}.
The segmentation of an image is constructed via the minimization of the energy consisting of unary, pairwise and high-order potentials.
The parameters of the unary and the pairwise potentials are obtained on the standard training set via the piecewise training procedure.
The unary and pairwise potentials are based on the TextonBoost model~\cite{Shotton06} and are computed by the ALE library~\cite{Ladicky09ale}.\!\footnote{\url{http://www.inf.ethz.ch/personal/ladickyl/ALE.zip}}
The unary potentials are learned via a boosting algorithm applied to a number of local features.
The pairwise potentials are contrast-sensitive Potts based on the 8-neighborhood grid.
The high-order potentials are robust~$\mathcal{P}^n$-Potts~\eqref{eq::robustPnPotts} and are based on the superpixels (image regions) computed by the mean shift segmentation algorithm~\cite{Comaniciu02}.

The two models differ in the number of segments used to construct the high-order potentials. In the first one we use segments produced by one mean-shift segmentation with the bandwidth parameters set to~$7$ for the spatial domain and~$9.5$ for the LUV-colour domain. In the second model we use the three segmentations computed with the following pairs of parameters: $(7, 6.5)$, $(7, 9.5)$, $(7, 14.5)$~-- the first number corresponds to the spatial domain, the second one~-- to the colour domain. In both models for potential~$\theta_C$ based on a set of nodes~$C \subseteq \mathcal{V}$ parameters $Q$ and $\gamma$ are selected as recommended by~\citet{kohli2009robust}:
$Q = 0.1 |C|$, $\gamma = |C|^{0.8} (0.2 + 0.5 G(C))$ where $G(C) = \exp( - 12 \|\sum_{i \in C}(\vec{f}_i - \mu)^2 \| / |C|)$, $\mu = (\sum_{i \in C} \vec{f}_i) / |C|$ and $\vec{f}_i$ is the color vector in the RGB format: $\vec{f}_i \in [0, 1]^3$\!\!. We refer to the two models as 1-seg and 3-seg models.

\paragraph{Baseline.} As a baseline we choose a ``generic optimizer'' by~\citet{Komodakis09highorder}. We implement it as a dual decomposition method with the following decomposition of the energy: all pairwise potentials are split into vertical,  horizontal, and diagonal (2 directions) chains, each high-order potential forms a separate subproblem, unary potentials are evenly distributed between the horizontal and the vertical forests. We refer to this method as CWD.

Within the CWD method the dual function depends on $|\mathcal{V}| |\mathcal{P}| (3 + h)$ variables. Here $h$ is the average number of high-order potentials incident to individual nodes. For the 1-seg model $h$ equals~$1$ and for the 3-seg model~$h = 3$. The dual function in the SMR method  depends on $|\mathcal{V}|$ variables. In our experiments the CWD's duals have 5725440 and 8588160 variables for the 1-seg and 3-seg models respectively, the SMR's duals have 68160 variables in both models. We used the L-BFGS~\cite{Lewis08} method with step size tolerance of $10^{-2}$ and maximum number of iterations of $300$ as the optimization routine.

\paragraph{Results.}
Fig.~\ref{fig::toyProblem} reports the results of the CWD and SMR methods applied to both segmentation models on the test set of the MSRC dataset (256 images).
For each number of oracles calls we report the lower bounds and the energies.
Lower bounds are the values of the dual functions~$D(\vec{\lambda})$ obtained at the corresponding oracle calls.
Energies are computed for the labelings obtained by assigning the first possible labels to the nodes where subproblems disagree.
To aggregate the results we subtract from each energy the constant such that the best found lower bound equals zero.

We observe that SMR converges faster than CWD in terms of the value of the dual and obtains better primal solutions. Note, that the more sophisticated version of CWD, PatB~\cite{Komodakis09highorder}, could potentially improve the performance only when the high-order potentials intersect. For the 1-seg model high-order potentials do not intersect so PatB is equivalent to CWD.

\begin{figure}
\centering
\includegraphics[width=8cm]{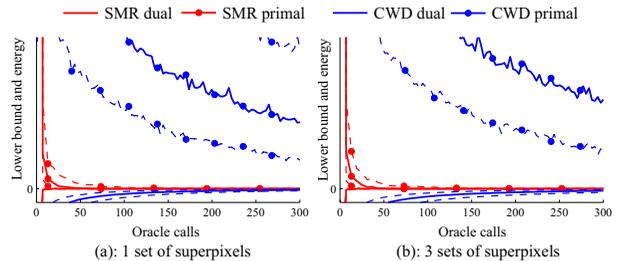}
\caption{Energies/lower bounds produced by SMR and CWD on the image segmentation models. All blue curves correspond to CWD, red curves~-- to SMR.
Plots (a) and (b) correspond to the 1-seg and 3-seg models respectively.
For each point of each plot we report the results aggregated across the standard test set: the medians (solid lines), the lower and the upper quartiles (dashed lines). This means that for each plot 50\% of all curves pass between the dashed lines.
\label{fig::toyProblem}}
\end{figure}

\paragraph{Comparison against $\alpha$-expansion.} We report the comparison of the SMR method against the original $\alpha$-expansion-based algorithm of~\citet{kohli2009robust}.
We run SMR (L-BFGS optimization) and $\alpha$-expansion methods on the energies constructed for the test set of the MSRC dataset (256 instances) as described in the previous section.
The SMR approach provides the certificate of optimality for 194 (out of 256) instances for the 1-seg model and for 197 (out of 256) instances for the 3-seg model.
The $\alpha$-expansion method finds the energy equal to the lower bound obtained by SMR in 145 and 147 cases for the two models respectively.
We exclude energies where both methods found the global minimum from further analysis.
We subtract from each energy such a constant that the best found lower bound equals zero and compare the obtained energies in table~\ref{tbl::robustPnPott}.
For each method and each model we report the median (50\%), lower (25\%) and upper (75\%) quartiles of obtained energies.

The median SMR convergence time up to accuracy of $10^{-2}$ for the 1-seg model is 23.4 seconds for the cases with no integrality gap and 136.3 seconds for the cases with the gap. For the 3-seg model the analogous numbers are 24.0 seconds and 164.3 seconds. The median running time for the $\alpha$-expansion for the two models was 2.6 and 6.3 seconds.

\begin{table}
\renewcommand{\arraystretch}{1.3}
\caption{Comparison of the energies obtained by the SMR and the~$\alpha$-expansion~\cite{kohli2009robust} algorithms on the two models with robust $\mathcal{P}^n$-Potts potentials.
\label{tbl::robustPnPott}}
\centering
\begin{tabular}{c|cc|cc}
\hline
  & \multicolumn{2}{c|}{ {\bfseries 1 segmentation} }  &  \multicolumn{2}{c}{ {\bfseries 3 segmentations}}\\
  \hline
percentile  &  $\alpha$-exp. & SMR & $\alpha$-exp. & SMR \\
\hline
25\% & 2.34  & 0     & 2.40  &  0    \\
50\% & 9.11  & 0.04  & 9.22  &  0.44 \\
75\% & 31.60 & 5.22  & 25.46 &  4.92 \\
\hline
\end{tabular}
\end{table}

\subsection{Global linear constraints}

In this section we experiment with the ability of SMR to take into account global linear constraints on the indicator variables. We compare the performance of SMR against the available baselines. See \citet{Osokin11cvpr} for the evaluation of SMR on interactive image segmentation and magnetogram segmentation tasks.

\paragraph{Dataset.}
We use a synthetic dataset similar to the one used by~\citet{Kolmogorov06trws}. We generate 10-label problems with $50 \times 50$ 4-neighborhood grid graphs. The unary potentials are generated i.i.d.:  $\theta_{ip} \sim \mathcal{N}(0, 1)$. The pairwise potentials are Potts with weights generated i.i.d. as an absolute value of $\mathcal{N}(0, 0.5)$. As global constraints we've chosen strict class size constraints~\eqref{eq::strictAreaConstr}. Class sizes are deliberately set to be significantly different from the global minimum of the initial unconstrained energy. In particular, for class $p=1,\dots,10$ we set its size to be $|\mathcal{V}| p /\! \sum_{q=1}^{\raise -.2em \hbox{\scriptsize $10$}} q$.

\paragraph{Baselines.}
First, we can include the global linear constraints in the double-loop scheme where the inner loop corresponds to the TRW-like algorithm that solves the standard LP-relaxation and the outer loop solves the Lagrangian relaxation of the global constraints:
\begin{align}
\label{eq::GTRW}
\max_{\vec{\xi}, \vec{\pi}} \;& LB_{TRW}(\tilde{\vec{\theta}}) - \!\sum_{m = 1}^M \xi_m c^m - \!\sum_{k = 1}^K \pi_k d^k, \\
\notag
\st \; &
\tilde\theta_{jp} = \theta_{jp} + \!\!\sum_{m = 1}^M \xi_mw_{jp}^m + \!\!\sum_{k= 1}^K \pi_k v_{jp}^k, \: \forall j \in \mathcal{V}, \: \forall p \in \mathcal{P},
\\
\notag &
\tilde\theta_{ij,pq} = \theta_{ij,pq}, \quad \forall \{i,j\} \in \mathcal{E},\; \forall p, q \in \mathcal{P},
\\
\notag &
\pi_k\ge 0, \quad \forall k = 1,\dots, K.
\end{align}
Here~$LB_{TRW}(\tilde{\vec{\theta}})$ is the lower bound (we use TRW-S~\cite{Kolmogorov06trws}) on the global minimum of the energy with potentials defined by~$\tilde{\vec{\theta}}$. We will refer to this algorithm as GTRW.

Another baseline is based on the MPF framework of~\citet{Woodford09mpf}. Let~$LB_{TRW}(\check{\vec{\theta}}(\vec{\mu}))$ be the solution of the standard LP-relaxation of the energy with modified unary potentials: $\check{\theta}_{jp}(\vec{\mu}) = \theta_{jp} - \mu_{jp}$, $\check{\theta}_{ij,pq}(\vec{\mu}) = \theta_{ij,pq}$. Define
\begin{align}
\label{eq::mpfTask}
\Phi_{G}(\vec{\mu}) = \min_{\vec{y}} \quad & \langle \vec{\mu}, \vec{y} \rangle, \\
 \notag \st \quad & \sum_{j \in \mathcal{V} }y_{jp}= c_p \ge 0, \quad \forall p \in \mathcal{P},
 \\
 \notag  &
 0 \leq y_{jp} \leq 1, \quad \forall j \in \mathcal{V}, \; \forall p \in \mathcal{P}.
\end{align}
Task~\eqref{eq::mpfTask} is a well-studied transportation problem (can be solved when $\sum_{p \in \mathcal{P}}c_p=|\mathcal{V}|$).

Function $LB_{TRW}(\check{\vec{\theta}}(\vec{\mu})) + \Phi_{G}(\vec{\mu})$ is concave and piecewise linear as a function of~$\vec{\mu}$. In this experiment we compute~$LB_{TRW}(\check{\vec{\theta}}(\vec{\mu}))$ with TRW-S and $\Phi_{G}(\vec{\mu})$~-- with the simplex method. We refer to this method as MPF.

\paragraph{Results.} Fig.~\ref{fig:toyAveragePlots} shows the convergence of SMR, DD~TRW, and MPF averaged over 50 randomly generated problems. For MPF we do not consider the time required to solve the transportation problem $\Phi_{G}(\vec{\mu})$ since we do not have the efficient implementation for the simplex method.
For all the methods we report the lower bounds, the energies and the total constraint violation of the obtained primal solutions.\!\footnote{
The choice of primal solution is performed by the following heuristics: in SMR all pixels with conflicting labels in the same connected component were assigned to the same randomly selected label from the set of conflicting labels; in GTRW and  MPF primal solution was chosen as the labeling generated by TRW-S.
}
Based on fig.~\ref{fig:toyAveragePlots} we conclude that all three methods converge to the same point
but the SMR lower bound converges faster and hence a primal solution with both low energy and small constraint violation can be found.

\begin{figure}[t]
\centering
\includegraphics[width=8.2cm]{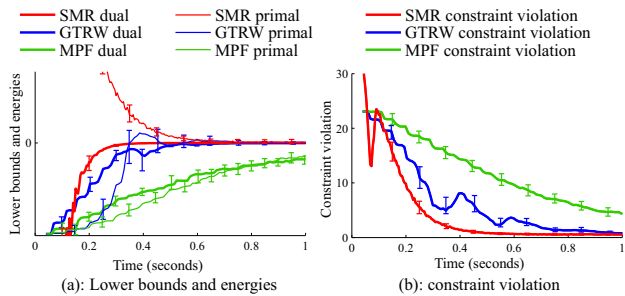}
\caption {Comparison of SMR, GTRW and MPF methods.
Plot (a) shows the lower bounds and the energies obtained by each method. Note that the energies can be lower than the lower bounds because the labelings violate the global hard constraints. Plot (b) shows the total constraint violation of the primal solution (measured in the percentage of pixels that have to be recolored to make the constraints consistent). For each curve we report the median, the lower and the upper quartiles.  \label{fig:toyAveragePlots}}
\end{figure}

\section{Conclusion \label{sec::conclusion}}
In the paper we propose a novel submodular relaxation framework for the task of MRF energy minimization. In comparison to the existing algorithms based on the decomposition of the problem's hypergraph to hyperedges and trees, our approach requires fewer dual variables and therefore converges much faster especially when the number of labels is relatively small.
State-of-art convex optimization methods together with the dynamic max-flow algorithm make the optimization of the SMR dual efficient so that SMR can be potentially applied to larger problems with sparse high-order potentials.
We study the theoretical properties of the dual solution provided by SMR under various conditions and show the equivalence of SMR lower bound to the specific LP relaxation of the initial discrete problem.

\ifCLASSOPTIONcompsoc
  \section*{Acknowledgments}
\else
  \section*{Acknowledgment}
\fi
We would like to thank Vladimir Kolmogorov and Bogdan Savchynskyy for valuable discussions and the anonymous reviewers for great comments.
This work was supported by Microsoft: Moscow State University Joint Research Center (RPD 1053945), Russian Foundation for Basic Research (projects 12-01-00938 and 12-01-33085),
Skolkovo Institute of Science and Technology (Skoltech): Joint Laboratory Agreement - No 081-R dated October 1, 2013, Appendix A 2.

\appendices

\section{Additional plots for sec.~\ref{sec::exp::optimization}}
Fig.~\ref{fig::optimizationExtra} provides additional plots for the experiment described in sec.~\ref{sec::exp::optimization} of the main paper.
Fig.~\ref{fig::optimizationExtra}a-b show zoomed versions of lower-bound plots for image segmentation and stereo instances (fig.~\ref{fig::optimization}a-b of the main paper).
Fig.~\ref{fig::optimizationExtra}c-d show values of the energy (primal) achieved at each iteration.
Fig.~\ref{fig::optimizationExtra}e-f are zoomed versions of fig.~\ref{fig::optimizationExtra}c-d.

\section{Non-submodular Lagrangian}
\subsection{The family of lower bounds\label{sec::nsmrFormulation}}
Consider the task of minimizing the pairwise energy formulated in the indicator notation:
\begin{align}
\label{eq::smd::problemPairwiseIndicator}
\min_{\vec{y}}&\quad \sum_{i \in \mathcal{V}} \sum_{p \in \mathcal{P}} \theta_{ip} y_{ip} + \sum_{ \{i, j\} \in \mathcal{E}} \sum_{p,q \in \mathcal{P}} \theta_{ij,pq} y_{ip} y_{jq},\\
\label{eq::smd::discretizationContraints}
\st
&\quad y_{ip} \in \{0, 1\}, \quad \forall i \in \mathcal{V}, p \in \mathcal{P},\\
\label{eq::smd::consistencyConstraints}
 &\quad \sum_{p \in \mathcal{P}} y_{ip} = 1, \quad \forall i \in \mathcal{V},
\end{align}

Sec.~4.1 
of the main paper considers the case when all values $\theta_{ij,pq}$ of the potentials are non-positive.
In this section we elaborate on the case when it is not true.
To apply the SMR method (see sec.~4.2) in this case we have to subtract from each pairwise potential a constant equal to the maximum of its values: $\max_{p,q} \theta_{ij,pq}$. This procedure can badly affect the sparsity of the initial potentials, e.g. if the potential specifically prohibits certain configurations. If we drop the subtraction step we retain sparsity but expression~\eqref{eq::smd::problemPairwiseIndicator} is no longer guaranteed to be submodular, thus minimizing it over the Boolean cube is hard. Instead of solving this minimization exactly we can solve its standard LP relaxation using QPBO method \cite{Boros02}, \cite{Kolmogorov07qpbo} and further perform Lagrangian relaxation of the consistency constraint~\eqref{eq::smd::consistencyConstraints}. We refer to this approach as \emph{nonsubmodular relaxation} (NSMR) and describe it more formally.

\begin{figure}[t]
\centering
\includegraphics[width=8cm]{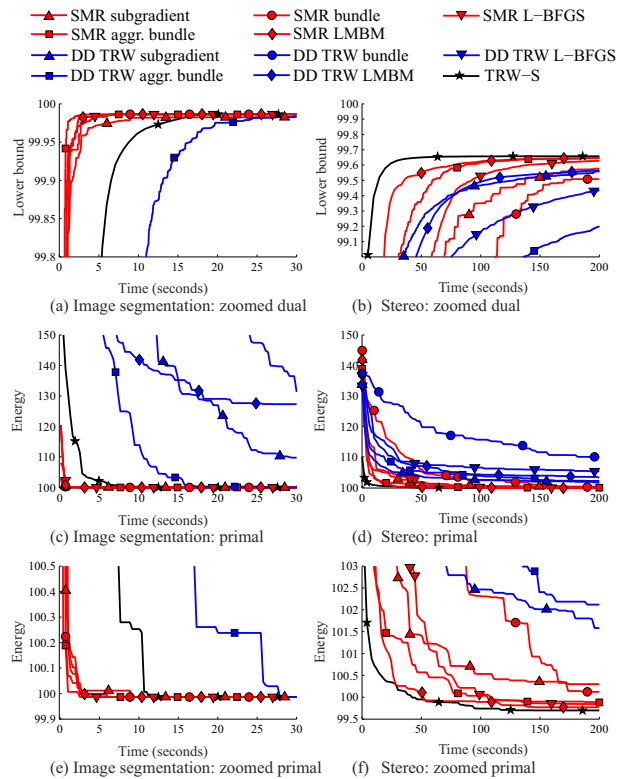}
\caption{Extra plots for the experiment in sec.~7.1.
\label{fig::optimizationExtra}}
\end{figure}

Consider the following Lagrangian:
\begin{align}
\notag
L(\vec{y}, \vec{\lambda})
&=
\sum_{i \in \mathcal{V}} \sum_{i \in \mathcal{P}} \theta_{ip} y_{ip} +
\sum_{ \{i, j\} \in \mathcal{E}} \sum_{ p, q \in \mathcal{P}} \!\!\!\!\theta_{ij,pq} y_{ip} y_{jq} \\
\label{eq::gpld::lagrangian}
&+
\sum_{i \in \mathcal{V}}\lambda_i \biggl( \sum_{p \in \mathcal{P}} y_{ip} - 1 \biggr)
\end{align}
It is a pairwise pseudo-Boolean function w.r.t. variables~$\vec{y}$ so we can apply the QPBO algorithm.
The algorithm provides us with a lower bound $D_{QPBO}(\vec{\lambda})$ on the solution of~\eqref{eq::smd::problemPairwiseIndicator}-\eqref{eq::smd::consistencyConstraints} and a \emph{partial labeling} of $\vec{y}$, i.e. each $y_i$ is assigned a value from set $\{ 0, 1, \emptyset \}$.
The QPBO lower bound is known to be equal to the bound provided by the standard LP relaxation~\cite[th.~6 and~7]{Boros02} and thus function $D_{QPBO}(\vec{\lambda})$ is concave and piecewise-linear w.r.t. $\vec{\lambda}$.

To compute a subgradient of $D_{QPBO}(\vec{\lambda})$ we use the fact that the standard LP relaxation of a pairwise energy minimization problem is half-integral~\cite[prop.~20]{Boros02}. Specifically, if QPBO assigns~$0$, $1$, or $\emptyset$  to $y_{ip}$ the corresponding unary variable of the LP-relaxation ($y_{ip}$) equals $0$, $1$, or $0.5$ respectively. Pairwise variables $y_{ij,pq}$ can be set by the following rule (see lemma~\ref{theor::gpldQpboLpLemma}):
\begin{equation}
\label{eq::gpld::defZ}
 y_{ij,pq} =
    \begin{cases}
        \min(y_{ip},y_{jq}), & \theta_{ij,pq} \leq 0, \\
        \max(0,y_{ip}+y_{jq}-1), & \mbox{otherwise}.
    \end{cases}
\end{equation}
We explore tightness of the NSMR lower bound in section~\ref{sec::nsmrTightness} and compare it experimentally against the SMR with the subtraction trick in sec.~\ref{sec::nsmrComparison}.

\subsection{Tightness of the NSMR bound\label{sec::nsmrTightness}}
\begin{theoremFixed}{8}
\label{theor::nsmrTightness}
The lower bound given by NSMR (sec.~\ref{sec::nsmrFormulation}) method applied to a pairwise energy equals to the minimum value of the following linear program:
\begin{align}
\label{eq::nsmrTheor::target}
\min_{\vec{y}}\quad
&
\sum_{i\in \mathcal{V}} \sum_{p\in \mathcal{P}}  \theta_{ip} y_{ip} +
\sum_{\{i,j\} \in \mathcal{E}} \sum_{p, q \in \mathcal{P}}  \theta_{ij,pq} y_{ij, pq},
\\
\label{eq::nsmrTheor::segmentIntConstr}
\mbox{s.t.}\;
&
y_{ip}, y_{ij,pq} \geq [0, 1],\\
\label{eq::nsmrTheor::constr1}
&
y_{ij, pq} \!\leq\! y_{i p},\:\!y_{ij, pq} \!\leq\! y_{j q},\:\!\forall \{i,j\}\!\!\in\!\mathcal{E},\:\!\forall p,q\!\in\!\mathcal{P},\\
\label{eq::nsmrTheor::constr2}
&
y_{ij, p} \!\geq\! y_{ip} \!+\! y_{jp} \!-\! 1, \: \forall \{i,j\}\!\!\in\!\mathcal{E},\:\!\forall p,q\!\in\!\mathcal{P},
\\
\label{eq::nsmrTheor::constrConsist}
&
\sum_{p\in \mathcal{P}} y_{ip} = 1, \quad \forall i \in \mathcal{V}.
\end{align}
\end{theoremFixed}
\begin{proof}
We start the proof with the lemma that formulates the standard LP relaxation of a pairwise energy in a convenient form:
\begin{lemmaFixed}{3}
\label{theor::gpldQpboLpLemma}
Consider the problem of minimizing energy
$
E(\vec{y}) = \sum_{i \in \mathcal{A}} a_i y_i + \sum_{ \{i, j\} \in \mathcal{B}} b_{ij} y_i y_j
$
w.r.t. binary variables $\vec{y} = \{ y_i \}_{i \in \mathcal{A}}$ where $\mathcal{A}$ and $\mathcal{B}$ are sets of unary and pairwise terms correspondingly.
The standard LP relaxation of the problem can be formulated as follows:
\begin{align}
\label{eq::gpld::subbproblemTarget}
\min_{y_{i}, y_{ij}}\quad
&
\sum_{i\in \mathcal{A}} a_i y_{i} + \sum_{\{i,j\} \in \mathcal{B}} b_{ij} y_{ij},
\\
\label{eq::gpld::subproblemIntConstr}
\mbox{s.t.}\quad
& y_{i}, y_{ij} \geq 0,
\\
\label{eq::gpld::subproblemConstr1}
&
y_{ij} \leq y_{i},\; y_{ij} \leq y_{j}, \quad \forall \{i,j\} \in \mathcal{B},
\\
\label{eq::gpld::subproblemConstr2}
&
y_{ij} \geq y_{i} + y_{j} - 1, \quad \forall \{i,j\} \in \mathcal{B}.
\end{align}
\end{lemmaFixed}
\begin{proof}
At first, recall how the standard LP relaxation looks like in this case:
\begin{align}
\label{eq::gpldLPSubproblemTarget}
\min_{\vec{z}}\quad
&
\sum_{i\in \mathcal{A}} a_i z_{i1} + \sum_{\{i,j\} \in \mathcal{B}} b_{ij} z_{ij, 11},
\\
\label{eq::gpldLPSubproblemNonNegConstr}
\mbox{s.t.}\quad
& z_{i1}, z_{i0}, z_{ij,11},z_{ij,01},z_{ij,10},z_{ij,00} \geq 0,
\\
\label{eq::gpldLPSubproblemConstr1}
&
z_{i1}+z_{i0}=1,
\\
\notag
&
z_{ij,10}+z_{ij,11}=z_{i1}, \quad
z_{ij,01}+z_{ij,00}=z_{i0},
\\
\label{eq::gpldLPSubproblemConstr2}
&
z_{ij,01}+z_{ij,11}=z_{j1},\quad
z_{ij,00}+z_{ij,10}=z_{j0}.
\end{align}

Given the solution of LP~\eqref{eq::gpldLPSubproblemTarget}-\eqref{eq::gpldLPSubproblemConstr2} we can construct the solution of LP \eqref{eq::gpld::subbproblemTarget}-\eqref{eq::gpld::subproblemConstr2} with the same value of the target function by setting $y_{i} = z_{i1}$, $y_{ij} = z_{ij,11}$. The feasibility of this assignment is implied by non-negativity of $\vec{z}$ and constraints~\eqref{eq::gpldLPSubproblemConstr2}.

On the other hand given the solution of~\eqref{eq::gpld::subbproblemTarget}-\eqref{eq::gpld::subproblemConstr2} we can set $z_{j1} = y_{j}$, $z_{j0} = 1 - z_{j1}$, $z_{ij,11} = y_{ij}$, $z_{ij,01}=z_{j1}-z_{ij,11}$, $z_{ij,10}=y_{i1}-z_{ij,11}$, $z_{ij,00}=z_{ij,11}+1-y_{i1}-y_{j1}$, which implies the feasibility by construction.
\end{proof}

Now we complete the proof of theorem~\ref{theor::nsmrTightness}.
QPBO lower bound $D_{QPBO}(\vec{\lambda})$ is known to be equal to the minimum of the standard LP relaxation~\cite[th.~6 and~7]{Boros02} and therefore can be expressed in form~\eqref{eq::gpld::subbproblemTarget}-\eqref{eq::gpld::subproblemConstr2} that corresponds to LP \eqref{eq::nsmrTheor::target}-\eqref{eq::nsmrTheor::constr2}. Applying Lagrangian relaxation to consistency constraints~\eqref{eq::nsmrTheor::constrConsist} (lemma~2) adds them to the LP.
\end{proof}

\section{Comparison of theorem~\ref{theor::nsmrTightness} and corollary~1 \label{sec::nsmrComparison}}
SMR can be applied to problem~\eqref{eq::smd::problemPairwiseIndicator}-\eqref{eq::smd::consistencyConstraints} in case when some $\theta_{ij,pq}\geq 0$ using the subtraction trick described in sec. 4.2. In this section we compare the LP obtained this way with the one provided by theorem~\ref{theor::nsmrTightness}. The LP given by corollary~1 together with the the subtraction trick looks as follows:\\[-0.3cm]
\begin{align}
\label{eq::LP1:pairwise}
\min_{\vec{y}}\; &\sum_{i \in \mathcal{V}}\!\sum_{p \in \mathcal{P}} \theta_{ip} y_{ip} \!+\!\!\!\!\!\!\sum_{ \{i, j\} \in \mathcal{E}} \!\!\biggl(\sum_{p,q \in \mathcal{P}} \!\!(\theta_{ij, pq} - \theta^*_{ij}) y_{ij, pq}\!+\!\theta^*_{ij}\!\biggr),\\
\notag
\st\; &y_{ip}\in[0,1],  \quad \forall i\in\mathcal{V}, \; \forall p\in\mathcal{P},\\
\notag
& y_{ij, pq} \in[0,1],  \quad \forall \{i,j\} \in \mathcal{E}, \; \forall p,q \in \mathcal{P}, \\
\label{eq::LP2cons:pairwise} & y_{ij, pq} \!\leq\! y_{i p}, \: y_{ij, pq} \!\leq\! y_{j q},
 \: \forall \{i,j\}\!\in\!\mathcal{E}, \forall p,q\!\in\!\mathcal{P},\\
\label{eq::LP3cons:pairwise}
&\sum\nolimits_{p \in \mathcal{P} } y_{ip}=1, \quad \forall i\in\mathcal{V}
\end{align}
where~$\theta^*_{ij} = \max_{p,q} \theta_{ij, pq}$.
The optimal point of LP~\eqref{eq::LP1:pairwise}-\eqref{eq::LP3cons:pairwise} is feasible for LP~\eqref{eq::nsmrTheor::target}-\eqref{eq::nsmrTheor::constrConsist} and vice versa. Note, that at the same time the polyhedra defined by the two sets of constraints are not equivalent.

We provide the experimental comparison of the integrality gap in problem~\eqref{eq::nsmrTheor::target}-\eqref{eq::nsmrTheor::constrConsist} against the integrality gap of~\eqref{eq::LP1:pairwise}-\eqref{eq::LP3cons:pairwise} and the integrality gap given by the TRW-S~\cite{Kolmogorov06trws} and DD~TRW~\cite{Komodakis10ddtrw} algorithms.

We generate an artificial data set similar to the one used in sec.~7.3 of the main paper.
Specifically we generate 50 energies of size $20 \times 20$ with 5 labels each. The unary potentials are generated i.i.d.:  $\theta_{ip} \sim \mathcal{N}(0, 1)$. The pairwise potentials are Potts with weights generated i.i.d. from $\mathcal{N}(0, 0.5)$. Note, that many of these potentials are not associative.

We report the results in table~\ref{tbl:nsmr}. We can conclude that NSMR provides much tighter relaxation than the SMR with the subtraction trick and is comparable to the relaxation provided by TRW-S.

\begin{table}
\caption{Comparison of integrality gaps provided by NSMR, SMR with the subtraction trick and
TRW-S. We report the results aggregated over 50 energies.
For the three different percentiles (the median and the two quartiles) we report the integrality gaps.
 \label{tbl:nsmr}}
\begin{center}
\begin{tabular}{c|cccc}
\hline
percentile
&
NSMR
&
Subtraction
&
TRW-S
&
DD TRW \\
\hline
25\% & 0 & 332 & 0 & 0 \\
50\% & 0.0012 & 355 & 0.0034 & 0.0021 \\
75\% & 0.1621 & 381 & 0.3188 & 0.1155 \\
\hline
\end{tabular}
\end{center}
\end{table}

\section{Proof of theorem~1}
\begin{theoremFixed}{1}\label{theor::marginAveraging}
Rule~$\lambda_j^{new} = \lambda_j^{old} + \Delta_j$, $\lambda_i^{new} = \lambda_i^{old}$, $i\neq j$, $ \delta^j_{(2)} \leq \Delta_j \leq \delta^j_{(1)}$,
when applied to a pairwise energy with associative pairwise potentials delivers the maximum of function $D(\vec{\lambda})$ w.r.t. variable~$\lambda_j$.
\end{theoremFixed}
\begin{proof}
We start with proving lemma~\ref{lemma::minMarginals}.
\begin{lemmaFixed}{4}\label{lemma::minMarginals}
Consider a pseudo-Boolean function~$F: \mathbb{B}^n \to \mathbb{R}$. Denote the difference between the min-marginals of variable~$i$ and labels~$0$ and~$1$ with~$\delta_i$: $\delta_i = MM_{i,0} F - MM_{i,1} F$. The minimum of the energy with one unary potential added can be computed explicitly:
\begin{equation*}
\min_{\vec{z} \in \mathbb{B}^n } \left( F(\vec{z}) + \delta z_i \right) =
\begin{cases}
MM_{i,0} F, \quad &\delta \geq \delta_i, \\
MM_{i,1} F + \delta, \quad & \delta \leq \delta_i.
\end{cases}
\end{equation*}
\end{lemmaFixed}
\begin{proof}
Recall the definition of the min-marginals of~$F$: $MM_{i,0} F = \min_{\vec{z}:\; z_i = 0} F(\vec{z})$ and~$MM_{i,1} F = \min_{\vec{z}:\; z_i = 1} F(\vec{z})$. Min-marginals of the extended function~$F\vec{z} + \delta z_i$ of variables~$z_i$ can be computed analitically:
$
MM_{i,0} (F + \delta z_i) = \min_{\vec{z}:\; z_i = 0} F(\vec{z})
$
and
$
MM_{i,1} (F + \delta z_i) = \min_{\vec{z}:\; z_i = 1} F(\vec{z}) + \delta.
$
Equality
$$
\min_{\vec{z} \in \mathbb{B}^n } F\vec{z} + \delta z_i = \min( MM_{i,0} (F + \delta z_i), \; MM_{i,1} (F + \delta z_i) )
$$
finishes the proof.
\end{proof}
Let us finish the proof of theorem~\ref{theor::marginAveraging}.
We exploit the fact that in case of pairwise associative potentials the dual function can be represented as the following sum:
\begin{equation}
\label{eq::dualAppendix}
D(\vec{\lambda}) = \sum_{p \in \mathcal{P}} \min_{\vec{y}_p}\Phi_p( \vec{y}_p, \vec{\lambda}) - \sum_{i \in \mathcal{V}} \lambda_i
\end{equation}
where each summand~$\Phi_p( \vec{y}_p, \vec{\lambda})$ is a submodular function w.r.t. variables~$\vec{y}_p$.
It suffices to show that $D(\vec{\lambda}^{new}) \geq D(\vec{\lambda}^\delta)$ where  $\lambda^\delta_i = \lambda^{old}_i$, $i\neq j$, and $\lambda^\delta_j = \lambda^{old}_j + \delta$, $\delta \in \mathbb{R}$.

Consider case when~$\delta < \Delta_j$. We have $\delta < \delta^j_{(1)}$ where
$$
\delta^j_{(1)}
\!\!=\!\!
MM_{j,0} \; \Phi_{p^{(1)}_j}(\vec{y}_{p^{(1)}_j},\vec{\lambda}^{old}) - MM_{j,1}\; \Phi_{p^{(1)}_j}(\vec{y}_{p^{(1)}_j},\vec{\lambda}^{old}).
$$
Lemma~\ref{lemma::minMarginals} and inequality~$\delta < \delta^j_{(1)}$ ensure that
\begin{equation}
\label{eq::proofMM1}
\min_{\vec{y}_{p^{(1)}_j}}\Phi_{p^{(1)}_j}(\vec{y}_{p^{(1)}_j},\vec{\lambda}^{new})
=
\min_{\vec{y}_{p^{(1)}_j}} \Phi_{p^{(1)}_j}(\vec{y}_{p^{(1)}_j},\vec{\lambda}^{\delta})+\Delta_j - \delta.
\end{equation}
For all other summands of the dual~\eqref{eq::dualAppendix} inequalities
\begin{equation}
\label{eq::proofMM2}
\min_{\vec{y}_p}\Phi_p(\vec{y}_p,\vec{\lambda}^{new}) \ge \min_{\vec{y}_p}\Phi_p(\vec{y}_p,\vec{\lambda}^{\delta}), \quad \forall p \ne p^{(1)}_j
\end{equation}
hold because $\Delta_j > \delta$.

After summing up~\eqref{eq::proofMM1} and~\eqref{eq::proofMM2} we use~\eqref{eq::dualAppendix} and get
$
D(\vec{\lambda}^{new})
\geq
D(\vec{\lambda}^{\delta}).
$

Now consider case~$\delta > \Delta_j$. We have $\delta > \delta^j_{(2)}$ where
$$
\delta^j_{(2)}
\!\!=\!\!
MM_{j,0} \; \Phi_{p^{(2)}_j}(\vec{y}_{p^{(2)}_j},\vec{\lambda}^{old}) - MM_{j,1}\; \Phi_{p^{(2)}_j}(\vec{y}_{p^{(2)}_j},\vec{\lambda}^{old}).
$$
Lemma~\ref{lemma::minMarginals} and inequality~$\delta > \delta^j_{(2)}$ ensure that
\begin{equation}
\label{eq::proofMM5}
\min_{\vec{y}_p}\Phi_p(\vec{y}_p,\vec{\lambda}^{new})
=
\min_{\vec{y}_p}\Phi_p(\vec{y}_p,\vec{\lambda}^{\delta}), \quad \forall p \ne p^{(1)}_j,
\end{equation}
and
\begin{equation}
\label{eq::proofMM6}
\min_{\vec{y}_{p^{(1)}_j}}\Phi_{p^{(1)}_j}(\vec{y}_{p^{(1)}_j},\vec{\lambda}^{new})
\geq
\min_{\vec{y}_{p^{(1)}_j}} \Phi_{p^{(1)}_j}(\vec{y}_{p^{(1)}_j},\vec{\lambda}^{\delta})+\Delta_j - \delta.
\end{equation}
After summing up~\eqref{eq::proofMM5} and~\eqref{eq::proofMM6} we use~\eqref{eq::dualAppendix} and get
$
D(\vec{\lambda}^{new})
\geq
D(\vec{\lambda}^{\delta}).
$
\end{proof}

\section{Proof of theorem~2}
\begin{theoremFixed}{2}
\label{theor::wac}
A maximum point~$\vec{\lambda}^*$ of the dual function~$D(\vec{\lambda})$ satisfies the weak agreement condition.
\end{theoremFixed}
\begin{proof}
Let us assume the contrary: point~$\vec{\lambda}^*$ is not a weak agreement point. This implies that for some node $j \in \mathcal{V}$ either there exist labels $p \neq q$ such that $Z_{jp}(\vec{\lambda}^*)=Z_{jq}(\vec{\lambda}^*)=\{1\}$ or $Z_{jp}(\vec{\lambda}^*)=\{0\}$ holds for all labels~$p$. In the first case we can take $\vec{\lambda}^1$ such that for all $i\ne j$ holds $\lambda_i^1=\lambda_i^*$ and $\lambda_j^1=\lambda_j^*+\varepsilon$ where $\varepsilon>0$. If $\varepsilon$ is small enough $\{1\}$ still belongs to both $Z_{jp}(\vec{\lambda}^1)$ and $Z_{jq}(\vec{\lambda}^1)$. Hence $\vec{y}^*\in \Argmin L(\vec{y},\vec{\lambda}^1)$ but $D(\vec{\lambda}^1) > D(\vec{\lambda}^*)$ contradicting the assumption that $\vec{\lambda}^*$ is a maximum point point of $D(\vec{\lambda})$. The second case can be proven by considering $\lambda_j^1=\lambda_j^*-\varepsilon$.
\end{proof}

\section{Proof of theorem~3}
\begin{theoremFixed}{3}
\label{theor::minMarginalWeakAgreement}
Coordinate ascent algorithm (fig.~2) when applied to a pairwise energy with associative pairwise potentials returns a point that satisfies the weak agreement condition.
\end{theoremFixed}
\begin{proof}
When converged the algorithm returns a point $\vec{\lambda}$ where~$\delta^j_p \leq 0 \leq \delta^j_{(1)}$, $p \in \mathcal{P} \setminus \{p_j^{(1)}\}$, $j \in \mathcal{V}$. Consequently $1 \in Z_{j, p_j^{(1)}}$ and $0 \in Z_{j, p}$, $p \in \mathcal{P} \setminus \{p_j^{(1)}\}$, which means that the weak agreement is satisfied.
\end{proof}

\section{Proof of lemma 1}
\begin{lemmaFixed}{1}
\label{theor::lpHighOrderLemma}
Let~$y_1, \dots, y_K$ be real numbers belonging to the segment~$[0, 1]$, $K > 1$.
Consider the following linear program:
\begin{align}
\label{eq::lemmaHighOrderLp:target}
\min_{z, z_1, \dots, z_K \in [0, 1]} \quad&  z(K - 1) - \sum_{k = 1}^K z_k \\
\label{eq::lemmaHighOrderLp:constraint}
\st \qquad\;\; &
z_k \leq z, \; z_k \leq y_k.
\end{align}
Point~$z = z_1 = \dots = z_K = \min_{k = 1, \dots, K} y_k$ delivers the minimum of LP~\eqref{eq::lemmaHighOrderLp:target}-\eqref{eq::lemmaHighOrderLp:constraint}.
\end{lemmaFixed}
\begin{proof}
Without restricting the generality let us assume that $y_1 \leq \dots \leq y_K$.
Denote the solution of ~\eqref{eq::lemmaHighOrderLp:target}-\eqref{eq::lemmaHighOrderLp:constraint}, with~$z^*, z^*_1, \dots, z^*_K$, and the value of solution~-- with~$f^*$.
Variable~$z$ has a positive coefficient in the linear function~\eqref{eq::lemmaHighOrderLp:target} so $z^* = \max_{k = 1, \dots, K} z^*_k$. Analogously $z^*_k = \min(y_k, z^*)$.

Let us show that $z^* \geq y_1$. Assume the opposite, i.e.~$\varepsilon =  y_1 - z^* > 0$.  Point~$z^* + \varepsilon$, $z_1^* + \varepsilon, \dots, z^*_K + \varepsilon$ is feasible and at it function~\eqref{eq::lemmaHighOrderLp:target} equals $f^* - \varepsilon$, which contradicts the optimality of~$f^*$.

Let us show that $z^* \leq y_2$. Assume the opposite. There are two case possible:
\begin{enumerate}
\item $y_K < z^*$. Denote $\varepsilon =  z^* - y_K > 0$. Point~$z^* - \varepsilon$, $z_1^*, \dots, z^*_K$ is feasible and delivers the value of~\eqref{eq::lemmaHighOrderLp:target} equal to $f^* - \varepsilon$, which contradicts the optimality of~$f^*$.
\item $y_\ell  < z^* \leq y_{\ell + 1}$, for some~$\ell \in \{2, \dots, K\}$. Denote $\varepsilon =  z^* - y_\ell > 0$. Point~$z^* - \varepsilon$, $z_1^*, \dots, z_\ell^*, z_{\ell + 1}^* - \varepsilon, \dots, z^*_K - \varepsilon$ is feasible and delivers the value of~\eqref{eq::lemmaHighOrderLp:target} equal to $f^* - \varepsilon(\ell - 1)$, which contradicts the optimality of~$f^*$.
\end{enumerate}

All points of form $z^* \in [y_1, y_2]$, $z_1^* = y_1$, $z^*_2 = \dots = z^*_K = z^* $ are feasible and deliver the optimal value~$f^*$ to function~\eqref{eq::lemmaHighOrderLp:target}. The point specified in the statement of the lemma is of this form.
\end{proof}




\bibliographystyle{IEEEtranSN}
\bibliography{library}

\vspace{-1cm}
\begin{biography}[{\includegraphics[width=1in,height=1.25in,clip,keepaspectratio]{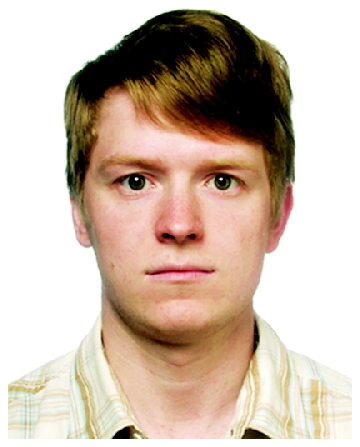}}]{Anton Osokin}
received his M.Sc. (2010) and Ph.D. (2014) degrees from Moscow State University where he worked in Bayesian Methods research group under supervision of Dmtry Vetrov. He is currently a postdoctoral researcher with SIERRA team of INRIA and \'{E}cole Normale Sup\'{e}rieure. His research interests include machine learning, computer vision, graphical models, combinatorial optimization. In 2012-2014 he received the president scholarships for young researches.
\end{biography}


\vspace{-1cm}
\begin{biography}[{\includegraphics[width=1in,height=1.25in,clip,keepaspectratio]{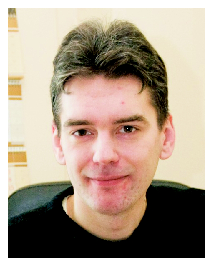}}]{Dmitry P. Vetrov}
received the M.Sc. and Ph.D. degrees from Moscow State University in 2003 and 2006, respectively. Currently he is the head of Department at the faculty of Computer Science in Moscow Higher School of Economics. He is leading Bayesian Methods research group in MSU and HSE. His research interests include deep learning, computer vision, Bayesian inference and graphical models. In 2010-2014 he received the president scholarships for young researchers.
\end{biography}






\end{document}